\newtheorem{definition}{Definition}
\newtheorem{theorem}{Theorem}
\newtheorem{assumption}{Assumption}
\newtheorem{lemma}{Lemma}
\newtheorem{remark}{Remark}
\newtheorem{corollary}{Corollary}
\begin{document}

%%%%%%%%% TITLE
\title{On the Guaranteed Almost Equivalence between Imitation Learning from Observation and Demonstration}

% \author{Zhihao Cheng\\
% The University of Sydney\\
% % For a paper whose authors are all at the same institution,
% % omit the following lines up until the closing ``}''.
% % Additional authors and addresses can be added with ``\and'',
% % just like the second author.
% % To save space, use either the email address or home page, not both
% \and
% Liu Liu\\
% The University of Sydney\\

% \and
% Aishan Liu\\
% Beihang University\\

% \and
% Hao Sun\\
% The University of Hong Kong\\

% \and
% Meng Fang\\
% Tencent\\

% \and
% Dacheng Tao\\
% The University of Sydney\\

% }
\author{

    %Authors
    % All authors must be in the same font size and format.
    %Anonymous Authors\\
    %\\
    Zhihao Cheng\textsuperscript{\rm 1},
    Liu Liu\textsuperscript{\rm 1},
    Aishan Liu\textsuperscript{\rm 2},
    Hao Sun\textsuperscript{\rm 3},
    Meng Fang\textsuperscript{\rm 4},
    Dacheng Tao\textsuperscript{\rm 1}\\
    $^1$ The University of Sydney\\%
    $^2$ Beihang University\\%
    $^3$ The Chinese University of Hong Kong\\
    $^4$ Tencent Robotics X\\
    %Written by AAAI Press Staff\textsuperscript{\rm 1}\thanks{With help from the AAAI Publications Committee.}\\
    %AAAI Style Contributions by Pater Patel Schneider,
    %Sunil Issar,  \\
    %J. Scott Penberthy,
    %George Ferguson,
    %Hans Guesgen,
    %Francisco Cruz,
    %Marc Pujol-Gonzalez
    %\\
}
\date{
    $^1$ The University of Sydney\\%
    $^2$ Beihang University\\%
    $^3$ The Chinese University of Hong Kong\\
    $^4$ Tencent Robotics X\\
    %\today
}

\maketitle

%%%%%%%%% ABSTRACT
\begin{abstract}
   Imitation learning from observation (LfO) is more preferable than imitation learning from demonstration (LfD) due to the nonnecessity of expert actions when reconstructing the expert policy from the expert data. However, previous studies imply that the performance of LfO is inferior to LfD by a tremendous gap, which makes it challenging to employ LfO in practice. By contrast, this paper proves that LfO is almost equivalent to LfD in the deterministic robot environment, and more generally even in the robot environment with bounded randomness. In the deterministic robot environment, from the perspective of the control theory, we show that the inverse dynamics disagreement between LfO and LfD approaches zero, meaning that LfO is almost equivalent to LfD. To further relax the deterministic constraint and better adapt to the practical environment, we consider bounded randomness in the robot environment and prove that the optimizing targets for both LfD and LfO remain almost same in the more generalized setting. Extensive experiments for multiple robot tasks are conducted to empirically demonstrate that LfO achieves comparable performance to LfD. In fact, most common robot systems in reality are the robot environment with bounded randomness (i.e., the environment this paper considered). Hence, our findings greatly extend the potential of LfO and suggest that we can safely apply LfO without sacrificing the performance compared to LfD in practice.\footnote{Preprint. Under review. Our code and dataset will be made available upon publication.\\
   \indent \ \ Contact: zche3121@uni.sydney.edu.au}
\end{abstract}

%%%%%%%%% BODY TEXT

\section{Introduction}
Imitation learning (IL), which tries to mimic the expert's behaviors, has been successfully applied to many tasks, \emph{e.g.}, self-driving \cite{bojarski2016end,li2017infogail}, navigation \cite{silver2008high}, and robot locomotion \cite{merel2017learning}. According to the information contained within the expert references, imitation learning can be divided into two categories, Learning from Demonstration (LfD) and Learning from Observation (LfO) \cite{goo2019one,wake2020learning}. Specifically, the only difference between LfD and LfO is the required input data, \emph{i.e.}, the former demands both states and actions while the latter only needs states. 

%Two representative algorithms for LfD and LfO are Generative Adversarial Imitation Learning (GAIL) and Generative Adversarial Imitation from Observation(GAIfO).
Intuitively, without actions from the expert, it would be more challenging to duplicate behaviors in the imitation learning scenario. However, motivations for LfO are comparatively strong: (1) it is infeasible to record actions in various practical scenarios. For example, in the robot locomotion task, we often desire robots to clone human behaviors. However, it is significantly difficult for us to obtain human actions (\emph{e.g.}, the forces and torques acting on the joints and actuators \cite{peng2018deepmimic,wang2019imitation,park2019learning}). In these cases, LfD is impracticable due to the absence of human actions. By contrast, LfO could still work effectively by exploiting massive online videos \cite{peng2018sfv}; (2) it is quite unnatural to conduct imitation learning using actions as the guidance. Intelligent creatures primarily learn skills by observing how others accomplish tasks \cite{douglas2006observational}. 
Thus, exploiting clues from actions in imitation learning would somewhat limit the intelligence of agents to a comparatively low level; (3) directly imitating actions is infeasible in situations where the expert and the learner are executing tasks in different environment dynamics \cite{Gangwani2020State-only}. Hence, naively asking the learner to copy actions of the expert would lead to performance degeneration or even control failure. 

Though the three motivations mentioned above illustrate the necessity of LfO, a huge performance gap between LfO and LfD has been reported in previous works \cite{torabi2018behavioral,torabi2018generative,yang2019imitation}. The comparatively low performance of LfO compared to LfD imposes restrictions on LfO's potential in real-world applications. Intuitively, the learning process for LfD is more straightforward since expert references provide the direct action guidance to the imitator. On the contrary, in terms of LfO, the agent is required to figure out the right action to generate state transitions similar to the expert ones. %For complex environment with high action dimensions, it is demanding for the learner to explore and exploit the proper action. 
This is the conjecture for the existence of the performance disparity \cite{torabi2018generative}. Based on this conjecture, Yang et al. \cite{yang2019imitation} proved that the performance gap is primarily caused by the inverse dynamics disagreement between the imitator and the expert. They further pointed out that LfO is equivalent to LfD when the environment dynamics are injective. However, an abnormal phenomenon occurs that LfO can perform as well as LfD in simple environments that are not injective (\emph{e.g.,} Pendulum-v2) \cite{torabi2018generative,sun2019adversarial}. 
%The existence of this performance gap severely prevents the employment of LfO in real-world systems, even worse, the essence is still unknown.
%\aishan{here, the following conclusion is weird. why it is important???}
Therefore, investigating the root for the performance gap and bridging the gap between these two methods are significantly beneficial to improve the employment of LfO in practice. In this paper, we prove that the inverse dynamics disagreement between the expert and the imitator approaches zero in deterministic robot tasks or robot tasks with bounded randomness, which means that LfO is almost equivalent to LfD. 

  %Yang et al. \cite{yang2019imitation} proved that the performance gap is caused by the inverse dynamics disagreement of the learner and the expert theoretically. Furthermore, they proposed an algorithm to narrow the performance gap between GAIfO and GAIL by minimizing the inverse dynamics disagreement. But an abnormal phenomenon occurs that GAIfO could perform as well as GAIL in simple environments, for instance, Pendulum-v2. In fact, all of those tasks, including simple ones like Pendulum-v2 and complex ones like Walker2d-v2 \cite{brockman2016openai}, are based on the MuJoCo engine,  and they share the same property determinacy. In other words, they are all deterministic robot tasks. We argue that the inverse dynamics model for the expert and the imitator is the same given such environments. %Hence, we suspect that the core of the performance gap is probably not caused by the inverse dynamics model and is still not discovered.
  %Hence, we believe the essence of the performance gap is not resulted from the inverse dynamics mismatch proposed in \cite{yang2019imitation} but not yet discovered.\par
  %In this paper, a deep investigation on the performance gap is conducted.
In this paper, we conduct a deep investigation into the difference between LfO and LfD from the control theory. Based on the most representative imitation learning algorithms: Generative Adversarial Imitation Learning (GAIL) \cite{ho2016generative} and Generative Adversarial Imitation from Observation (GAIfO) \cite{torabi2018generative}, we first use the deterministic property of the environment to analyze the value of the inverse dynamics model in a more rigorous way. Then, we establish the Euler-Lagrange dynamical equation for typical Mujoco tasks, and prove the existence and uniqueness of the action for a feasible state transition $(s,s')$. Combing the above two results, we prove that, in the same deterministic robot environment, the inverse dynamics models for the imitator and the expert are almost equal everywhere. 
Thus, the inverse dynamics disagreement between the expert and the imitator approaches zero, meaning that LfO is almost equivalent to LfD in deterministic robot tasks. 
%Thus, the inverse dynamics disagreement term, which is the Kullback–Leibler divergence of the inverse dynamics models for the expert and the imitator, is zero. In other words, there is no performance gap between LfD and LfO in deterministic robot tasks. 
To further relax the deterministic constraint and better adapt to the environment in practice, we consider the bounded randomness in the robot environment and prove that the optimizing targets for LfD and LfO remain almost same in a more generalized setting. 
  %\aishan{this whole paragraph is still not concise enough}
  %As a result, the gap observed in the past experiment results is not originated from the inverse dynamics model disagreement.
  Primary contributions of our paper are summarized as follows:
  \begin{itemize}
    \item By exploiting the Euler-Lagrange dynamical equation from control theory, we are the first to prove that in a deterministic robot environment, the performance gap between LfO and LfD approaches zero.
    \item We further prove that the optimizing targets for LfO and LfD remain almost same even if bounded randomness is considered in the robot dynamics, which extends our conclusion to more generalized settings. Moreover, this makes it possible to employ LfO in practical applications.
    \item Extensive experiments on various robot tasks illustrate that the performance of LfO is comparable to that of LfD. Furthermore, we give some discussions and suggestions on the factors which may affect the performance of LfO and LfD.
    %\item We will open source all of our code and dataset, which is the first open-source GAIfO and benchmarks the tasks of GAIfO with state-of-the-art performance to promote future research.
\end{itemize}

\section{Related Works}
%Imitation Learning tries to reproduce a policy which can mimic an expert's specific behaviors using only the expert references. In general, there are two spectra in IL, that is, Behavior Cloning (BC) and Inverse Reinforcement Learning (IRL) \cite{sun2019adversarial}.
Imitation Learning tries to reproduce a policy that can mimic an expert's specific behaviors using only expert references. From the perspective of the information contained in expert trajectories, imitation learning can be divided into Learning from Demonstration (LfD) and Learning from Observation (LfO) \cite{goo2019one,wake2020learning}.
%.there are two spectra in IL, that is, Learning from Demonstration (LfD) and Learning from Observation (LfO) 
\paragraph{LfD} LfD utilizes both states and actions to conduct imitation learning and can be divided into two categories, \emph{i.e.}, Behavior Cloning (BC) and Inverse Reinforcement Learning (IRL). Behavior Cloning, as indicated by its name, aims to clone expert's behaviors from expert trajectories which contain states and actions \cite{Bain95,Ross10,Ross11}. Thus, BC solves the imitation problem in a supervised learning manner, taking states as the inputs and actions as the labels. Given current states, the agent learns to predict actions as close as the expert ones. Inverse Reinforcement Learning imitates the expert from the perspective of reward shaping. IRL first reconstructs a reward function from expert trajectories, and then uses this reward function to guide a standard reinforcement learning process. A recent advance in IRL is Generative Adversarial Imitation Learning (GAIL) \cite{ho2016generative}, which makes no assumption on the form of the reward function. Instead, it utilizes a discriminator to measure the similarity of state-action pairs between the imitator and the expert and takes the similarity as the reward to perform forward reinforcement learning.%Although BC has been successfully applied in several domains like autonomous driving \cite{bojarski2016end}, the compound error which is resulted from the generalization problem of the supervised leaning paradigm limits its usage. 
\paragraph{LfO} LfO is developed in a scenario where expert actions are absent. Similar to BC and GAIL in LfD, corresponding LfO versions are proposed, \emph{i.e.}, Behavior Cloning from Observation (BCO) \cite{torabi2018behavioral} and Generative Adversarial Imitation from Observation (GAIfO) \cite{torabi2018generative}. BCO employs an inverse dynamics model to guess the possible expert actions, and then uses the guessed expert actions to conduct standard BC. Unlike GAIL which uses state-action pairs to generate the reward, GAIfO exploits state transitions to obtain the reward. Generally, the performance of GAIfO is inferior to that of GAIL due to the lack of direct action guidance. Another approach in LfO is to design a hand-crafted reward function with expert states and then employ ordinary reinforcement learning to maximize the episode cumulative reward \cite{peng2018deepmimic,peng2018sfv,peng2020learning}. However, it is not an easy task to design a proper hand-crafted reward function since there is no mature design paradigm for the reward function and it requires background knowledge from the given application field. 
%Two essential methods in IRL are apprenticeship learning \cite{abbeel2004apprenticeship} and maximum entropy IRL \cite{ziebart2008maximum}. 
 %The first one employs a linear function to approximate the latent reward function \cite{abbeel2004apprenticeship} while the latter uses a probabilistic model with entropy penalty to reduce ambiguity of the reward function \cite{ziebart2008maximum}. 
 %Later, it takes the output of the discriminator as virtual reward and tries to maximize the episode cumulative virtual rewards.
\paragraph{Relationship between LfD and LfO} 
%As discussed above, BC and GAIL belong to LfD, whereas BCO and GAIfO pertain to LfO. 
Several studies have empirically shown that LfO is more difficult than LfD \cite{torabi2018behavioral,torabi2018generative}. In other words, the performance of LfO is inferior to that of LfD. An intuitive explanation is that, expert references in LfD are able to teach the imitator the correct actions directly. By contrast, in LfO, the imitator is required to find out the right actions and generates the state-transitions that are similar to those of the expert via interacting with the environment. Evidently, this process brings higher complexity and difficulty. Specifically, Yang et al. \cite{yang2019imitation} investigated the reason for the performance gap between LfD and LfO using two representative IL algorithms, GAIL and GAIfO. %And they pointed out that the inverse dynamics disagreement term is responsible for the performance gap. In addition, they thought that due to the high complexity of real-world environments the inverse dynamics disagreement is not zero, implying that a performance gap exists between GAIL and GAIfO.
They believed that, in the complex real-world environment, the inverse dynamics disagreement is not zero, implying that a performance gap exists between LfD and LfO. They further presented a corollary that LfD is equivalent to LfO in injective dynamics. However, according to the definition of injectivity, injective dynamics indicate that a distinct state transition $(s,s')$ in its domain maps to a distinct action $a$ in its codomain. This is quite a harsh requirement and we could not find corresponding environments in practice.
 
Unlike previous works, we suggest that the performance gap between LfD and LfO approaches zero in the deterministic robot system (a class of common system). In particular, we do not require the environment dynamics to be injective. By contrast, we only need that there only exists a unique action $a$ for a feasible state transition $(s,s')$. But on the other hand, an action $a$ could correspond to many distinct state transitions, which means that the environment dynamics are not injective. As a result, our work tremendously relaxes the pre-requisite conditions for the equivalence between LfD and LfO, and extends the possible application domains of LfO in practice.

\section{Preliminaries}
To derive our theorem, prerequisite definitions and concepts are presented in advance. 
\paragraph{Markov Decision Process} We consider a Markov Decision Process described by $(S,A,T,R,\gamma)$, where $S$ and $A$ represent state space and action space respectively, $T=T(s'|s,a)$ is the environment dynamics modeling the probability of states transitions over actions, $R:S \times A \rightarrow R$ is the reward function, and $\gamma$ is the discount factor. Let $\pi(a_t|s_t):S\times A\rightarrow[0,1]$ be a stochastic policy for the agent where $t$ is the current timestep, $J(\pi)=\mathbb{E}_{\mathbf{s}_{0},\mathbf{a}_{0},\cdots}\left[R_0\right]$ in which $R_t=\sum_{l=0}^{\infty}\gamma^l r_{t+l}$ denoting the expected discounted reward, where $\mathbf{s}_0\sim\rho_0(\mathbf{s}_0)$, $a_t\sim\pi(a_t|s_t)$,  $\mathbf{s}_{t+1}\sim T(\mathbf{s}_{t+1}|\mathbf{s}_t,\mathbf{a}_t)$, and $\rho_0(\mathbf{s}_0)$ is the probability distribution of the initial state $s_0$. The goal of Reinforcement Learning (RL) algorithms is to find the optimal policy $\pi^{*}(a_t|s_t)$, which can achieve the maximum episode cumulative reward $J^*(\pi)$. \par
To measure the discrepancy between trajectories, we bring in the following definition which characterizes the property.  
\begin{definition}[Occupancy Measure]\cite{puterman1994markov,torabi2018generative}
Define the state occupancy measure for a distribution as follows:
\begin{align*}
    \rho_{\pi_\theta}(s) = \sum_{t=0}^{\infty} \gamma^t P(s_t = s|\pi_{\theta}),
\end{align*}
where $\gamma$ is a discount and $\pi_{\theta}(a|s)$ stands for the probability of a policy taking action $a$ given state $s$. For briefness, we omit the parameter of a policy and denote $\pi_{\theta}$ as $\pi$.
\end{definition}
In order to model the relationship between the state transitions and the actions, the inverse dynamics model is defined below. 
\begin{definition}[Inverse Dynamics Model]\cite{spong1990adaptive}\label{definition:inversedynamicsmodel}
Given a policy $\pi$ and the dynamics of the environment $T(s'|s,a)$, we can define the density of the inverse dynamics model: 
\begin{align}\label{equation:inversedynamicsmodel}
    \rho_{\pi}(a|s,s') = \frac{T (s'|s,a)\pi(a|s)}{\int_{A} T (s'|s,\bar a)\pi(\bar a|s) d\bar a}.
\end{align}
\end{definition}

\paragraph{GAIL} GAIL tries to minimize the divergence between the expert trajectory and the agent trajectory. When the trajectory generated from the agent's current policy matches that of the expert, it suggests that the trained policy can achieve similar performance to the expert. The type of divergence could be Kullback–Leibler divergence \cite{kullback1951information} or Jensen–Shannon divergence \cite{lin1991divergence}, which measures the distribution distance between trajectories. It is formalized as follows:
\begin{align*}%\label{equation:minmaxGAIL}
    \begin{split}
        \mathop{\min}\limits_{\theta} \mathop{\max}\limits_{\omega} \mathop{\mathbb{E}}\limits_{s,a \sim \rho_\theta}[\log D_w(s,a)]+\mathop{\mathbb{E}}\limits_{s,a \sim \rho_{E}}[\log(1- D_w(s,a))],
    \end{split}
\end{align*}
where $D_w(s,a)$ is a discriminator which judges the similarity of a state-action pair $(s,a)$ to the reference one, and $w$ represents the weights of the discriminator network. 
\paragraph{GAIfO} GAIfO is a modified version of GAIL, which only uses the observation pair $(s,s')$ to learn. Its objective is defined as follows:
\begin{align*}%\label{equation:minmaxGAIfO}
    \begin{split}
        \mathop{\min}\limits_{\theta} \mathop{\max}\limits_{\omega} \mathop{\mathbb{E}}\limits_{s, s' \sim \rho_\theta}[\log D_w(s,s')]+\mathop{\mathbb{E}}\limits_{s, s' \sim \rho_{E}}[\log(1- D_w(s, s'))].
    \end{split}
\end{align*}
However, according to the literature \cite{torabi2018generative,yang2019imitation}, GAIfO performs worse than GAIL despite its merit of not requiring expert actions. An insight into the performance gap is presented below.

\paragraph{Inverse Dynamics Disagreement between LfD and LfO} 
Yang et al. \cite{yang2019imitation} investigated the cause for the performance gap and revealed this cause with the following equation:
\begin{align*}
\begin{split}
    \mathbb{D}_{KL}(\rho_{\pi}(a|s,s')||\rho_{E}(a|s,s'))=\mathbb{D}_{KL}(\rho_{\pi}(s,a)||\rho_{E}(s,a)) - \mathbb{D}_{KL}(\rho_{\pi}(s,s')||\rho_{E}(s,s')),
\end{split}
\end{align*}
where $\mathbb{D}_{KL}(\rho_{\pi}(a|s,s')||\rho_{E}(a|s,s'))$ is the inverse dynamics disagreement which induces the performance gap between LfD and LfO. Furthermore, they proved that LfD and LfO should be equivalent when the environment dynamics are injective. In this paper, we would further analyze the inverse dynamics disagreement in a stricter manner by utilizing the property of robot environment dynamics, and our analysis prove that the equivalence of LfD and LfO holds even when the environment dynamics are not injective.

\section{Method}
In this section, we will present our theoretical results---the almost equivalence between LfD and LfO in both deterministic robot environments and bounded randomness robot environments, which are organized into three subsections. More precisely, firstly, some properties of the deterministic robot environment and the Euler-Lagrange Dynamical Equations are presented; then, we prove that on deterministic robot tasks, LfD and LfO are almost the same in terms of the optimizing target, which means that the performance gap between them approaches zero; finally, by exploiting the bounded randomness of the environment and the Lipshitz continuity of policies, the almost equivalence between LfO and LfD is still guaranteed. Theoretically, we can see that LfO could perform as well as LfD essentially, and our findings increase the potential for LfO to be applied in real-world robot systems. In the next section, the experimental results validate our theoretical analysis. \par
%Second, we argue that the performance gap in some experiments in several researches is resulted from insufficient optimization of both algorithms \cite{torabi2018generative,yang2019imitation}. To ensure that both GAIL and GAIfO can arrive at global optima, we propose a novel imitation learning method which utilizes spectral normalization. 
\subsection{Deterministic Robot Environments}\label{subsection:deterministic}
For LfD and LfO considered in Ho et al. \cite{ho2016generative} and Torabi et al. \cite{torabi2018generative}, respectively, the learner policy is interacting with the same environment in which the expert reference is sampled. That is to say, we do not consider environment variations in the imitation learning process, and this setting is consistent with previous researches showing the performance discrepancy. Thus, to formalize this common situation in imitation learning, an assumption is made consequently. 
\begin{assumption}\label{assum:samedynamics}
The environment dynamics for the learner and the expert remain same. 
\end{assumption}
This assumption does make sense in reality because it is the most basic task to demand the learner to imitate the expert without environment dynamics change and morphology inconsistency \cite{ho2016generative,sun2019adversarial}. Furthermore, in this subsection, we focus on one kind of more specific environment, \emph{e.g.}, the deterministic robot environment, which is the most fundamental setting for robots and is of significance to investigate the difference between LfO and LfD given such environment. 
%Some nice properties possessed by this setting enable us to give a further insight into the environment dynamics model. 
The deterministic property means that there is no randomness in the environment, leading to the following lemma. 
\begin{lemma}\label{lemma:dynamics}
For a deterministic environment, given the current state $s$, taking action $a$ will result in a certain next state $s'$. To distinguish this certain next state against the others, we denote it as $s_{d}'$ in which the subscript $d$ stands for determinacy. Consequently, the transition dynamics for both the imitator and the demonstrator can be further specified as:
\begin{align*}
T (s'|s,a)=\left\{
\begin{array}{rcl}
1.0       &      & {s'=s_{d}'  }\\
0.0     &      & {s'\in S \ and \ s'\neq s_{d}'}
\end{array} \right .
\end{align*}
\end{lemma}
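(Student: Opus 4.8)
The plan is to obtain the two-valued form of $T(s'\mid s,a)$ directly from the meaning of ``deterministic environment'' together with the fact that $T(\cdot\mid s,a)$ is a probability distribution, and then to extend the conclusion to both agents via Assumption~\ref{assum:samedynamics}. First I would make the determinacy hypothesis precise: an environment is deterministic when its transition dynamics are induced by a single-valued map $f\colon S\times A\to S$, so that executing $a$ in state $s$ always yields the same successor state; I would name this state $s_{d}':=f(s,a)$, matching the notation in the statement. In other words, no transition out of $(s,a)$ other than $s\mapsto s_{d}'$ is ever realized, so $P(s'=s_{d}'\mid s,a)=1$.

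Next I would turn this into the claimed piecewise expression. Since $T(\cdot\mid s,a)$ is a probability distribution over $S$ that already places full mass on the single outcome $s_{d}'$, there is no mass left for any other state: every value $T(s'\mid s,a)$ with $s'\neq s_{d}'$ must vanish, because these quantities are nonnegative and their total (sum or integral over $S$) equals one already from the $s_{d}'$ term. Hence $T(s'\mid s,a)=1.0$ when $s'=s_{d}'$ and $T(s'\mid s,a)=0.0$ for every $s'\in S$ with $s'\neq s_{d}'$, which is exactly the displayed formula; here the ``$1.0$'' is to be read as ``the whole probability mass concentrates at $s_{d}'$'' (a point mass, or a Dirac density when $S$ is continuous, consistent with the way $T$ is treated as a density in Definition~\ref{definition:inversedynamicsmodel}).

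Finally, to justify the clause ``for both the imitator and the demonstrator'', I would invoke Assumption~\ref{assum:samedynamics}: the learner and the expert act in the same environment, hence share the same dynamics $T$ and therefore the same deterministic map $f$ and the same $s_{d}'$ at each $(s,a)$, so the formula holds verbatim for either policy. I do not expect a genuine obstacle in this lemma; it is essentially an unpacking of the determinacy hypothesis into the notation that the later analysis of the inverse dynamics model will use. The only point worth stating carefully is that determinacy is assumed of the environment (equivalently, of $T$) rather than derived, and that the subscript $d$ is merely bookkeeping for the unique successor state on which the subsequent arguments about $\rho_{\pi}(a\mid s,s')$ will rely.
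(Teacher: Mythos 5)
Your proposal is correct and matches the paper's treatment: the paper offers no separate argument for this lemma, presenting it as a direct unpacking of the deterministic property (a single-valued successor $s_d'$ receiving all probability mass, shared by imitator and expert via Assumption~\ref{assum:samedynamics}), which is exactly what you do. Your added remark that the ``$1.0$'' should be read as a point mass (Dirac density in the continuous case, consistent with how $T$ enters Definition~\ref{definition:inversedynamicsmodel}) is a careful clarification the paper leaves implicit.
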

%\begin{comment}
%\begin{align}
%    T (s'|s,a)=\left\{
%    \begin{aligned}
%x & = & \cos(t) \\
%y & = & \sin(t) \\
%    \end{aligned} \right.
%\end{align}
%\end{comment}
The zero-one property of the environment dynamics can help us simplify the inverse dynamics model in Eq. \eqref{equation:inversedynamicsmodel}. Besides, to present another feature of the deterministic robot environment, we would borrow the idea from the field of control science. In MDP, $T(s'|s,a)$ is employed to describe the environment dynamics, \emph{i.e}., the dynamical relationship between forces and motions. In contrast to $T(s'|s,a)$ in MDP, the control science community uses differential equations to represent this relationship. Under Lagrangian mechanics \cite{brizard2nd}, we obtain the dynamics model in control science as follows:
% whether it needs to use Bxu rather than u.
% or whether need we add term of 
\begin{lemma}\label{lemma:lagrange} \cite{dubowsky1993kinematics,westervelt1st,nanos2012cartesian}
The differential equation, called Euler-Lagrange dynamical equation, which models the dynamics for deterministic robot environments, could be written in the following form:
\begin{align}\label{equation:Euler-La}
    M(q)\ddot q + C(q,\dot q)\dot q + G(q)= u,
\end{align}
where $q$ is the generalized coordinates, $\dot q$ is the derivative of the generalized coordinates, and $u$ is the control input. $M(q)$ is the inertia matrix which is positive definite, $C(q,\dot q)$ describes the centripetal and Coriolis torques, and $G(q)$ is the vector of gravitational torques. 
\end{lemma}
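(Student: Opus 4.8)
The plan is to derive Eq.~\eqref{equation:Euler-La} from first principles of Lagrangian mechanics, treating the robot as a holonomic mechanical system whose configuration is completely described by the generalized coordinates $q$. First I would write down the Lagrangian $L(q,\dot q) = K(q,\dot q) - P(q)$, where $K$ is the total kinetic energy of all links and $P$ is the potential energy. Since the spatial velocity of every link is a linear function of $\dot q$ with coefficients depending on $q$, the kinetic energy is a quadratic form in the velocities, $K(q,\dot q) = \tfrac{1}{2}\dot q^{\top} M(q)\dot q$; the matrix $M(q)$ is symmetric, and it is positive definite because $K > 0$ whenever $\dot q \neq 0$ (a genuinely moving system carries strictly positive kinetic energy). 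The potential energy collects only gravitational contributions and therefore depends on the configuration alone, $P = P(q)$.

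Next I would substitute $L$ into the Euler-Lagrange equations $\frac{d}{dt}\frac{\partial L}{\partial \dot q} - \frac{\partial L}{\partial q} = u$, where $u$ is the vector of generalized forces and torques exerted by the actuators. Term by term, $\frac{\partial L}{\partial \dot q} = M(q)\dot q$, hence $\frac{d}{dt}\frac{\partial L}{\partial \dot q} = M(q)\ddot q + \dot M(q)\dot q$, while $\frac{\partial L}{\partial q} = \tfrac{1}{2}\frac{\partial}{\partial q}\big(\dot q^{\top} M(q)\dot q\big) - \frac{\partial P}{\partial q}$. Collecting everything gives $M(q)\ddot q + \big(\dot M(q)\dot q - \tfrac{1}{2}\frac{\partial}{\partial q}(\dot q^{\top} M(q)\dot q)\big) + \frac{\partial P}{\partial q} = u$.

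The step I expect to require the most care is showing that the velocity-dependent remainder $\dot M(q)\dot q - \tfrac{1}{2}\frac{\partial}{\partial q}(\dot q^{\top} M(q)\dot q)$ can be factored as $C(q,\dot q)\dot q$ for a matrix $C$ that is linear in $\dot q$ with entries depending on $q$. I would establish this componentwise through the Christoffel symbols of the first kind, $c_{ijk}(q) = \tfrac{1}{2}\big(\frac{\partial M_{ij}}{\partial q_k} + \frac{\partial M_{ik}}{\partial q_j} - \frac{\partial M_{jk}}{\partial q_i}\big)$, setting $C_{ij}(q,\dot q) = \sum_k c_{ijk}(q)\dot q_k$; expanding $\dot M(q)\dot q = \sum_k \frac{\partial M}{\partial q_k}\dot q_k\,\dot q$ together with the gradient term and matching indices confirms the factorization, and by construction the entries of $C$ capture exactly the centripetal and Coriolis torques. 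Finally I would set $G(q) := \frac{\partial P}{\partial q}$, recognize it as the vector of gravitational torques, and conclude that the dynamics take the stated form $M(q)\ddot q + C(q,\dot q)\dot q + G(q) = u$ with $M(q)$ symmetric positive definite. If needed later, one can additionally observe that $\dot M - 2C$ is skew-symmetric, but this property is not required for the statement itself.
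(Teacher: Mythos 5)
Your derivation is correct and complete: the quadratic kinetic energy, the positive definiteness of $M(q)$ from $K>0$ for $\dot q\neq 0$, and the Christoffel-symbol factorization of the velocity-dependent terms into $C(q,\dot q)\dot q$ are exactly the standard Lagrangian-mechanics argument. The paper itself offers no proof of this lemma but simply cites the robotics literature, and what you have written is precisely the textbook derivation those references contain, so there is nothing to reconcile.
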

From the above lemma, Eq. \eqref{equation:Euler-La} is able to describe the system dynamics of the robot. Besides, both $q$ and $\dot q$ are defined as system states in control theory. However, to distinguish the states in MDP and Euler-Lagrange dynamical equation, we will merely use the term states for MDP and generalized coordinates for the control model, respectively. The connection between $s$ and $(q,\dot q)$ is illustrated by the following remark. 
\begin{remark}\label{remark:equalqa}
The generalized coordinates $(q,\dot q)$ have the capacity to express all the dynamical information of the robot. Hence, the state $s$ in MDP is equivalent to $(q,\dot q)$ in the control theory via some specific transforms. 
\end{remark}
 It means that $s$ and $(q,\dot q)$ can replace each other. As a result, we can investigate $(q,\dot q)$ rather than $s$ in MDP. In robot control, the control inputs are often forces or torques acting on the joints. Under this setting, the control input $u$ is exact the action $a$. Hence, we can use the Euler-Lagrange dynamical equation to analyze the relationship between the generalized coordinates transition $((q_t,\dot q_t),(q_{t+1},\dot q_{t+1}))$ and the control input $u$ instead of analyzing the unknown environment dynamics $T(s'|s,a)$. 
\subsection{Almost Equivalence between LfO and LfD in Deterministic Robot Environments}\label{subsection:eqdeterministic}
In this subsection, we first analyze the inverse dynamics models and the sufficient condition under which the inverse dynamics models of the imitator and the expert are equal everywhere. Then, we give the theorem proving that the inverse dynamics disagreement between LfD and LfO approaches zero, \emph{i.e.}, $\mathbb{D}_{KL}(\rho_{\pi}(a|s,s')||\rho_{E}(a|s,s'))\approx0$, leading to the almost equivalence of both considered approaches. \par
%Finally, the detailed proof is presented after the theorem. 
%The equation of inverse dynamics disagreement could be further derived as follows \cite{yang2019imitation}:
%\begin{align}\label{equation:detailinversedynamicsdis}
%\begin{split}
%    \mathbb{D}_{KL}&(\rho_{\pi}(a|s,s')||\rho_{E}(a|s,s')) = \\&\int_{S\times A\times S} \rho_{\pi}(s,a,s') log \frac{\rho_{\pi}(a|s,s')}{\rho_{E}(a|s,s')} dsdads'.
%\end{split}
%\end{align}
%According to Definiton \ref{definition:inversedynamicsmodel} and Eq. \eqref{equation:detailinversedynamicsdis}, if the environment dynamics $T(s'|s,a)$ plus the imitator policy $\pi_{\theta}(a|s)$ are able to guarantee that $\rho_{\pi}(a|s,s')=\rho_{E}(a|s,s')$ for every $(s,s',a)\in\{S\times S\times A\}$, the inverse dynamics disagreement would be zero. 
The difference lying between LfD and LfO is the inverse dynamics disagreement, that is,
\begin{align}\label{equation:detailinversedynamicsdis}
\begin{split}
    \mathbb{D}_{KL}(\rho_{\pi}(a|s,s')||\rho_{E}(a|s,s')) = \int_{S\times A\times S} \rho_{\pi}(s,a,s') log \frac{\rho_{\pi}(a|s,s')}{\rho_{E}(a|s,s')} dsdads'.
\end{split}
\end{align}
Eq. \eqref{equation:detailinversedynamicsdis} indicates that the inverse dynamics models of the learner and the expert determine how large the performance gap would be. 
As a result, based on Definition \ref{definition:inversedynamicsmodel} and Eq. \eqref{equation:detailinversedynamicsdis}, if the environment dynamics $T(s'|s,a)$ plusing the imitator policy $\pi_{\theta}(a|s)$ are able to guarantee that $\rho_{\pi}(a|s,s')=\rho_{E}(a|s,s')$ for every $(s,s',a)\in\{S\times S\times A\}$, then,  $\mathbb{D}_{KL}(\rho_{\pi}(a|s,s')||\rho_{E}(a|s,s'))=0$ and no difference exists between the two focus approaches. For deterministic robot environments, the theorem on the almost equivalence of the inverse dynamics models for the learner and the expert is given as follows.
\begin{theorem}\label{theorem1}
Under Assumption \ref{assum:samedynamics}, for deterministic robot controlled plant, the inverse dynamics disagreement between the learner policy and the expert policy approaches zero.
\end{theorem}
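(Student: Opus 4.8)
The plan is to show that in a deterministic robot environment the inverse dynamics model $\rho_{\pi}(a\mid s,s')$ collapses to a degenerate distribution that does not depend on the policy at all, so that $\rho_{\pi}(a\mid s,s')$ and $\rho_{E}(a\mid s,s')$ coincide (up to a null set) on the transitions that are feasible for both agents; substituting this into Eq.~\eqref{equation:detailinversedynamicsdis} makes the log-ratio integrand equal to $\log 1 = 0$ almost everywhere, so the disagreement vanishes (strictly, it \emph{approaches} zero once the boundary transitions on which the two policies' supports differ are accounted for).

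First I would use Lemma~\ref{lemma:dynamics} to rewrite the inverse dynamics model of Definition~\ref{definition:inversedynamicsmodel}. Since $T(s'\mid s,a)\in\{0,1\}$, for a fixed feasible pair $(s,s')$ the numerator of Eq.~\eqref{equation:inversedynamicsmodel} is supported exactly on $A_{s,s'}=\{a:\ T(s'\mid s,a)=1\}$, and the denominator integrates $\pi$ over that same set; hence $\rho_{\pi}(a\mid s,s')$ is just $\pi(a\mid s)$ restricted and renormalised to $A_{s,s'}$. Next I would invoke the Euler--Lagrange equation of Lemma~\ref{lemma:lagrange} together with Remark~\ref{remark:equalqa} to argue that $A_{s,s'}$ is a singleton. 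Identifying $s$ with $(q,\dot q)$, a one-step transition $(s,s')=((q,\dot q),(q',\dot q'))$ fixes $q,\dot q$ and determines the acceleration $\ddot q$ realised over the step; because $M(q)$ is positive definite and therefore invertible, the map $u\mapsto \ddot q = M(q)^{-1}\!\left(u - C(q,\dot q)\dot q - G(q)\right)$ is a bijection, so the control input $u=a$ consistent with the transition is unique (this is the ``existence and uniqueness of the action for a feasible state transition'' announced in the introduction). Consequently $A_{s,s'}=\{a^{*}(s,s')\}$ and the restriction-and-renormalisation above forces $\rho_{\pi}(a\mid s,s')=\delta_{a^{*}(s,s')}(a)$, \emph{independently of} $\pi$.

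Finally, applying the identical argument to the expert gives $\rho_{E}(a\mid s,s')=\delta_{a^{*}(s,s')}(a)$ with the \emph{same} $a^{*}$, since Assumption~\ref{assum:samedynamics} guarantees both agents see the same dynamics and hence the same feasible action. Therefore $\rho_{\pi}(a\mid s,s')=\rho_{E}(a\mid s,s')$ on the common support, and substituting into Eq.~\eqref{equation:detailinversedynamicsdis} the term $\log\frac{\rho_{\pi}(a\mid s,s')}{\rho_{E}(a\mid s,s')}$ is $0$ wherever $\rho_{\pi}(s,a,s')>0$, so the whole integral is $0$. The remaining gap between ``$=0$'' and the stated ``$\approx 0$'' absorbs the measure-zero/boundary set of transitions reachable by $\pi$ but not by $E$ (or vice versa), together with the fact that a degenerate $\rho$ is handled as a limit of increasingly concentrated densities.

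I expect the main obstacle to be making the uniqueness-of-action step rigorous: the Euler--Lagrange relation is a continuous-time ODE whereas the MDP transition is discrete, so one must argue that recovering $\ddot q$ from $((q,\dot q),(q',\dot q'))$ --- e.g. as the acceleration of the unique trajectory produced by a constant control over one step --- is well posed and that the induced map $u\mapsto(q',\dot q')$ is injective; positive-definiteness of $M(q)$ settles the instantaneous algebraic inversion, but the integration over a step needs care (small step size, or the explicit integrator used in the MuJoCo tasks). A secondary subtlety is the domain mismatch of $\rho_{\pi}(\cdot\mid s,s')$ and $\rho_{E}(\cdot\mid s,s')$: the KL divergence is only meaningful where the supports overlap, which is precisely why the conclusion is phrased as ``approaches zero'' rather than ``equals zero.''
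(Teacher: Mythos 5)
Your proposal is correct and follows essentially the same route as the paper's own proof: simplify $\rho_{\pi}(a\mid s,s')$ via the zero--one deterministic dynamics to the policy renormalised over the feasible action set, then use the Euler--Lagrange equation with the invertibility of $M(q)$ (and a small-step Euler-type discretisation, which you rightly flag as the delicate point) to show the feasible action is unique, so the two inverse dynamics models coincide and the KL disagreement vanishes. Your phrasing of the degenerate conditional as a Dirac delta is, if anything, slightly more careful than the paper's ratio-equals-one treatment, but it is the same argument.
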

\begin{proof}
Our analysis relies on the deterministic property and the Euler-Lagrange dynamical equation. We first use the deterministic character to simplify inverse dynamics models and obtain the sufficient condition to ensure the equality of inverse dynamics models between LfO and LfD. Subsequently, we employ the Euler-Lagrange dynamical equation to prove the uniqueness of the action for a feasible state transition $(s,s')$. \par
\textit{Sufficient Condition.} For a deterministic system, the state-action-state tuples $(s,a,s')$ recorded by interacting with the environment satisfy that $T(s'|s,a)=1$. Assume actions in set $A$ can be divided into two subsets, in which the actions belonging to the first subset can transfer state $s$ to $s'$ while the other could not. To better represent the first part of actions, we use symbol $A_{f}$ in which the subscript $f$ stands for feasibility. Thus, the inverse dynamics models for the imitator and the demonstrator could be simplified as follows:
\begin{align} 
    &\rho_{\pi}(a|s,s')=\frac{\pi_{\theta}(a|s)}{\int_{A_f} \pi_{\theta}(\bar a|s) d\bar a},\label{euqation:inversemodel0}\\
    &\rho_{E}(a|s,s')=\frac{\pi_{E}(a|s)}{\int_{A_f} \pi_{E}(\bar a|s) d\bar a}.\label{euqation:inversemodel1}
\end{align} 
Since that we have no access to the expert policy, the only way to guarantee $\rho_{\pi}(a|s,s')=\rho_{E}(a|s,s')$ is $\left| A_f\right|=1$, meaning that there is only one element in the set $A_f$. In other words, for a feasible state transition $(s,s')$, there is a unique action $a$ which can transfer the current state $s$ to the next $s'$. Thus, the inverse dynamics models are obtained as:
\begin{align*}
    \rho_{\pi}(a|s,s')=\frac{\pi_{\theta}(a|s)}{\pi_{\theta}(a|s)}=1=\frac{\pi_{E}(a|s)}{\pi_{E}(a|s)}=\rho_{E}(a|s,s').
\end{align*}
%It should be noted that, both policies are in the Gaussian form, so the probability of taking any action is not zero, which avoids singularities in the above equation. \par
Now, the equality of $\rho_{\pi}(a|s,s')$ and $\rho_{E}(a|s,s')$ turns into whether the unique existence of action $a$ holds for a feasible state transition $(s,s')$.\par 
\textit{Uniqueness of the Action.} According to Remark \ref{remark:equalqa}, we would focus on whether there exist more than one control inputs $u$ which can transfer the generalized coordinates from $(q_t,\dot q_t)$ to $(q_{t+1},\dot q_{t+1})$ where $t$ is the current timestep. We assume that there exist at least two possible control inputs $u_0$ and $u_1$, which can bring the current state $(q_t,\dot q_t)$ to the next $(q_{t+1},\dot q_{t+1})$ and $u_0 \neq u_1$. When applying $u$ to the robot, accelerations $\ddot q$ are generated from Eq. \eqref{equation:Euler-La}. Based on Euler Methods \cite{butcher2008numerical} and Euler-Lagrange dynamical equation (Lemma \ref{lemma:lagrange}), we prove that:
\begin{align*}
    u_0 \approx u_1.
\end{align*}
This result contradicts the assumption that there exist at least two different control inputs that can achieve the same generalized coordinates transition. Hence, it means that given a system generalized coordinates change, there is only one corresponding control input $u_{uni}$. Consequently, based on Remark \ref{remark:equalqa}, the existence and uniqueness of the action $a$ for a feasible state transition $(s,s')$ have been proved. Returning back to Eqs. \eqref{euqation:inversemodel0} and \eqref{euqation:inversemodel1}, we can guarantee that the inverse dynamics models for the learner and the expert are almost equal everywhere. In conclusion, for deterministic robot tasks, the almost equivalence between LfD and LfO holds. 
\end{proof}
%In the following, we will prove Theorem \ref{theorem1} based on Assumption \ref{assum:samedynamics} and Lemmas \ref{lemma:dynamics}-\ref{lemma:lagrange}.
Theorem \ref{theorem1} tells us that LfO is almost equivalent to LfD, and thus, there would be no performance difference in deterministic robot environments. Hence, researchers could safely employ LfO in this kind of environments without worrying about performance degeneration, which greatly expands its application range. We provide a more detailed proof in the Appendix.
\subsection{Almost Equivalence between LfO and LfD in Robot Environments with Bounded Randomness}\label{subsection:random}
In the robot system, Eq. \eqref{equation:Euler-La} is an ideal analytical model of the robot system, which does not contain disturbances \cite{liu2000disturbance}, un-modeled dynamics \cite{nguyen2015adaptive}, and parameter uncertainties \cite{burkan2003upper}. These factors will lead to stochasticity in the environment dynamics and impair the universality of Theorem \ref{theorem1}. To further cope with the situation in reality, we assume bounded randomness in the robot environment dynamics \cite{lu2010experimental,lin2011modeling,li2017robustness}. This means that the environment dynamics $T(s'|s,a)$ can randomly transfer the current state $s$ with action $a$ to a range of next states $\{s'|s'\in(s'_d-\frac{\epsilon}{2},s'_d+\frac{\epsilon}{2})\}$, where $s'_d$ is the certain next state in the deterministic setting and $\epsilon$ is the bound of the randomness which is assumed to be small. This assumption is reasonable and is able to cover the environment dynamics in the real-world. Then, a corollary of the inverse dynamics disagreement between LfD and LfO in random robot environments is presented below. 
\begin{corollary}\label{corollary1}
When the randomness of the environment dynamics is bounded by a small number $\epsilon$ and policies are Lipshitz continuous, the inverse dynamics disagreement between the learner policy and the expert policy approaches zero.
\end{corollary}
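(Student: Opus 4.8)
The plan is to reduce the bounded-randomness case to a small perturbation of the deterministic case analyzed in Theorem \ref{theorem1}. First I would rewrite the inverse dynamics disagreement using Eq. \eqref{equation:detailinversedynamicsdis} and observe that, because the environment dynamics now spread probability over the interval $\{s' : s' \in (s'_d - \tfrac{\epsilon}{2}, s'_d + \tfrac{\epsilon}{2})\}$ rather than onto a single point, the feasible action set $A_f$ for a given transition $(s,s')$ is no longer a singleton but a ``tube'' of actions whose diameter shrinks with $\epsilon$. Concretely, I would invoke the Euler--Lagrange equation \eqref{equation:Euler-La} together with the Euler discretization used in the proof of Theorem \ref{theorem1} to show that if two control inputs $u_0, u_1$ both drive $(q_t,\dot q_t)$ into the $\epsilon$-ball around the nominal next coordinates, then $\lVert u_0 - u_1 \rVert \le L_1 \epsilon$ for a constant $L_1$ depending on bounds of $M(q)^{-1}$, $C(q,\dot q)$, $G(q)$ and the step size. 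Hence $A_f$ has diameter $O(\epsilon)$.

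The second step uses the Lipschitz continuity of the policies. With $A_f$ confined to an $O(\epsilon)$-ball, both $\pi_\theta(\cdot|s)$ and $\pi_E(\cdot|s)$ are nearly constant on $A_f$: for any $a, \bar a \in A_f$, $|\pi_\theta(a|s) - \pi_\theta(\bar a|s)| \le L_\pi \lVert a - \bar a\rVert \le L_\pi L_1 \epsilon$, and likewise for $\pi_E$. Substituting into the simplified inverse dynamics models \eqref{euqation:inversemodel0}--\eqref{euqation:inversemodel1} — which still hold here with $A_f$ the feasible set and $T(s'|s,a)$ treated as (approximately) constant on the tube — gives
\begin{align*}
\rho_{\pi}(a|s,s') = \frac{\pi_\theta(a|s)}{\int_{A_f}\pi_\theta(\bar a|s)\,d\bar a} = \frac{1}{|A_f|}\bigl(1 + O(\epsilon)\bigr) = \rho_{E}(a|s,s') + O(\epsilon),
\end{align*}
so the integrand $\log\bigl(\rho_\pi(a|s,s')/\rho_E(a|s,s')\bigr)$ in \eqref{equation:detailinversedynamicsdis} is $O(\epsilon)$ pointwise. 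Integrating against the probability measure $\rho_\pi(s,a,s')$ yields $\mathbb{D}_{KL}(\rho_{\pi}(a|s,s')\,\|\,\rho_{E}(a|s,s')) = O(\epsilon)$, which approaches zero as $\epsilon \to 0$. I would then remark that this recovers Theorem \ref{theorem1} in the limit $\epsilon = 0$, and that the constant hidden in $O(\epsilon)$ is governed jointly by the robot's physical parameters and the policies' Lipschitz moduli.

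The main obstacle I anticipate is making the ``$A_f$ has diameter $O(\epsilon)$'' step fully rigorous: the map from control input $u$ to the resulting next coordinates $(q_{t+1},\dot q_{t+1})$ must be shown to be bi-Lipschitz (or at least to have a uniformly bounded-below derivative) so that an $\epsilon$-spread in the output corresponds to an $O(\epsilon)$-spread in the input. This requires the inertia matrix $M(q)$ to be not merely positive definite but uniformly so (eigenvalues bounded away from $0$ and $\infty$) on the reachable set, plus boundedness of $C$ and $G$ — essentially a compactness/regularity assumption on the robot's configuration space, which is mild for Mujoco tasks but should be stated. A secondary technical point is controlling the contribution of the tails where $\pi_\theta$ or $\pi_E$ is small, so that the normalizing integrals in \eqref{euqation:inversemodel0}--\eqref{euqation:inversemodel1} stay bounded away from zero; the Lipschitz assumption together with the positivity of the expert policy on $A_f$ handles this.
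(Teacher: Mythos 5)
There is a genuine gap at the step where you pass from the full inverse dynamics model to the ``simplified'' ones \eqref{euqation:inversemodel0}--\eqref{euqation:inversemodel1} by treating $T(s'|s,a)$ as approximately constant on the feasible tube $A_f$, and then conclude $\rho_{\pi}(a|s,s')\approx \tfrac{1}{|A_f|}\approx\rho_{E}(a|s,s')$. Under bounded randomness the transition density $T(s'|s,\cdot)$, for fixed $(s,s')$, varies on exactly the same scale $\epsilon$ as the tube itself: it is of order $1/\epsilon$ for actions whose nominal successor is near $s'$ and drops to zero at the boundary of $A_f$ (take, e.g., any non-uniform noise density supported on an $\epsilon$-interval). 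So its relative variation across $A_f$ is not small, the normalizing integral is not $T\cdot|A_f|$ up to $1+O(\epsilon)$, and the conditional densities are in general far from uniform on $A_f$; shrinking $\epsilon$ does not help, because the ``near-constancy'' you need is multiplicative and scale-free. (A secondary, related slip: writing $\rho_\pi=\rho_E+O(\epsilon)$ is mis-scaled, since both densities are $\Theta(1/\epsilon)$; only a multiplicative bound controls the log-ratio in \eqref{equation:detailinversedynamicsdis}.)

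The conclusion can be rescued without your unjustified step, and this is how the paper argues: keep $T$ inside, and use the (integral) mean value theorem to pull the policy out of the denominator, $\int_{A_f}T(s'|s,\bar a)\pi_\theta(\bar a|s)\,d\bar a=\pi_\theta(\widetilde a|s)\int_{A_f}T(s'|s,\bar a)\,d\bar a$, giving $\rho_\pi(a|s,s')=\Xi_\pi\,T(s'|s,a)/\int_{A_f}T(s'|s,\bar a)\,d\bar a$ with $\Xi_\pi=\pi_\theta(a|s)/\pi_\theta(\widetilde a|s)$, and analogously $\rho_E=\Xi_E\,T(s'|s,a)/\int_{A_f}T(s'|s,\bar a)\,d\bar a$. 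Because learner and expert share the same dynamics (Assumption \ref{assum:samedynamics}), the entire $T$-dependent factor cancels exactly in the ratio inside the KL divergence, no matter how non-constant $T$ is on $A_f$; Lipschitz continuity of the policies then only has to show $\Xi_\pi,\Xi_E\to 1$ as the tube shrinks, which is precisely where your policy-Lipschitz estimate belongs. Your first step --- using the Euler--Lagrange equation and a uniform lower bound on $M(q)^{-1}$ to argue that $A_f$ has diameter $O(\epsilon)$ --- is sound and in fact more explicit than the paper, which simply posits a small feasible interval of width $\delta$ tied to $\epsilon$; but the core comparison of $\rho_\pi$ and $\rho_E$ must go through the exact cancellation of the shared dynamics factor, not through near-uniformity of each density separately.
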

This means that even in the robot environment with bounded randomness LfO is almost equivalent to LfD. Corollary \ref{corollary1} extends the Theorem \ref{theorem1} to the stochastic setting, and enlarges the application fields of LfO. The proof of this corollary is in the Appendix. \par
It should be noted that the even the optimizing targets for LfD and LfO are almost same, this does not mean the performance of LfD and LfO could always be the same in the experiments because both LfD and LfO are not guaranteed to achieve the global optima especially with deep neural networks. 
%Hence, a large number of factors will affect the final performance of those two algorithms, such as the the expert data, the learning rate, the neural network structures, and etc.
%\begin{theorem}\label{theorem2}
%Under Assumption \ref{assum:samedynamics}, for robot controlled plant with bounded randomness, the inverse dynamics disagreement between the learner policy and the expert policy is zero.
%\end{theorem}
\section{Experiments}
%In this section, we now show the experiment results of GAIL and GAIfO. To evaluate the performance of GAIL and GAIfO thoroughly, we use all the OpenAI Gym Mujoco and Robotics tasks. First, the generation of expert data is presented. Then, we introduce the implementation details of GAIL and GAIfO. Finally, discussions and analysis on the experimental results are given.
In this section, we compare the experimental results of LfD and LfO on various Mujoco tasks ranging from toy tasks (\emph{e.g.,} InvertedPendulum-v2) to complicated ones (\emph{e.g.,} Humanoid-v2) in the OpenAI Gym \cite{brockman2016openai}. All of these environments satisfy Assumption \ref{assum:samedynamics} and Lemmas \ref{lemma:dynamics}-\ref{lemma:lagrange}, which indicates that they are deterministic robot locomotion tasks. We also provide some conjectures on the ``performance gap'', \emph{i.e.}, the factors that may lead to the performance gap. 

\subsection{Setup}
In this paper, we use OpenAI Mujoco tasks (\emph{e.g.,} InvertedPendulum-v2 and Humanoid-v2) for our experiments, which are continuous deterministic robot locomotion tasks. 
%\aishan{add one sentence for each dataset (or task) to illustrate the task. also add cite} 
Specifically, the observation and action dimensions for InvertedPendulum-v2 are $4$ and $1$, respectively. Humanoid-v2 has 376-dimensional observation and 17-dimensional action. To generate the expert data, we first train an expert policy using the reinforcement learning (RL) algorithm SAC \cite{haarnoja2018soft}; then we execute the policy deterministically in the environment to collect expert trajectories composed of state transition tuples or state-action pairs. Each trajectory consists of 1000 state transitions or state-action pairs. \par%Besides, we run each experiment 5 trails with different seeds to get a more objective result.
%\textcolor{red}{in the paragraph above you can describe the state-action space of each environment like: Humanoid-v2 (376-dim state and 17-dim action) rather than use easier or harder}
As for the implementation of GAIfO and GAIL, we follow the implementation of GAIL in the OpenAI Baselines \cite{dhariwal2017baselines} and implement the GAIfO consequently. Following \cite{dhariwal2017baselines}, we use the RL algorithm TRPO \cite{schulman2015trust} to serve as the generator. 
%It should be noted that, there is no need to require the comparisons of GAIL and GAIfO sharing the same parameters because our theorem only proves that the optimal GAIfO results should match those of GAIL. 
Commonly used hyper-parameters are as follows: the hidden size for all networks is $(100,100)$ and we use $tanh$ as the activation function; the learning rate for the discriminator is $3 \times 10^{-4}$ while that for the value network is $1 \times 10^{-3}$; the ratio between the update of the generator network and discriminator network is set to $3:1$. We only give the hidden layer size of neural networks since the input layer and output layer size varies in different environments. The input size of GAIL is defined by $(s,a)$ whereas that in GAIfO is determined by $(s,s')$, which distinguishes the discriminators in GAIL and GAIfO. More details are in the Appendix. 
%\begin{table}[h!]
%\caption{Hyper-parameters in experiments.}
%\centering
%\begin{tabular}{ c|c}
% \hline
% Hyper-parameters& Value\\
% \hline
% Network size(for all networks)   & (100,100)  \\
% Activation & $ReLU$\\
% Learning rate  & $3 \times 10^{-4}$ \\
% Batch size  & 1,024 \\
% Generator net update times & 3 \\
% Discriminator net update times & 1 \\
% \hline
%\end{tabular}
%\label{table:1}
%\end{table}
\begin{figure*}[htbp]
\centering
    \subfigure[InvertedPendulum-v2]{
    %\centering
    \includegraphics[width=0.3\textwidth]{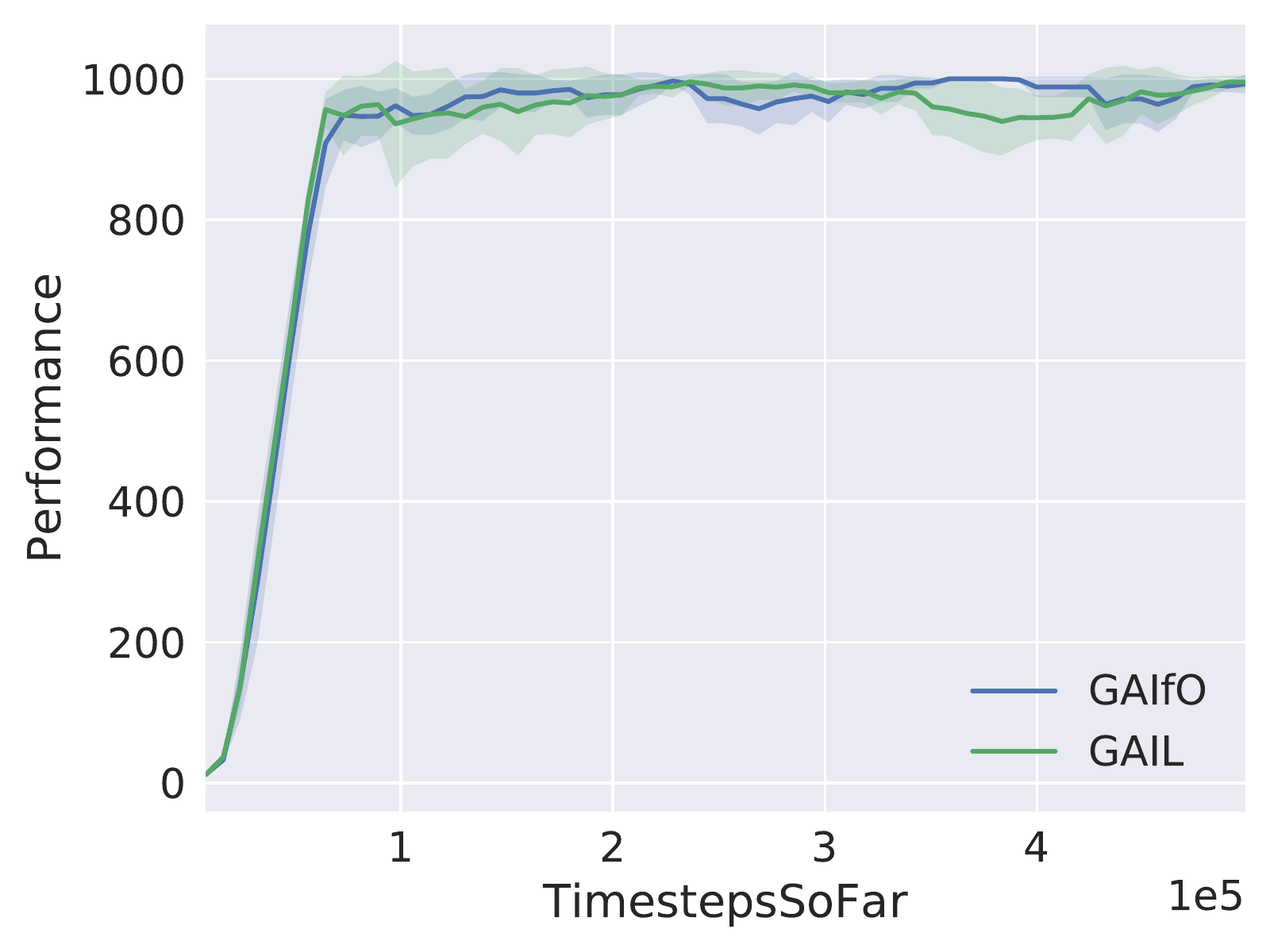}
    }
    %\subfigure[InvertedDoublePendulum-v2]{
    %\centering
    %\includegraphics[width=0.3\textwidth]{InvertedDoublePendulum-v2fianl.pdf}
    %}
    \subfigure[Hopper-v2]{
    %\centering
    \includegraphics[width=0.3\textwidth]{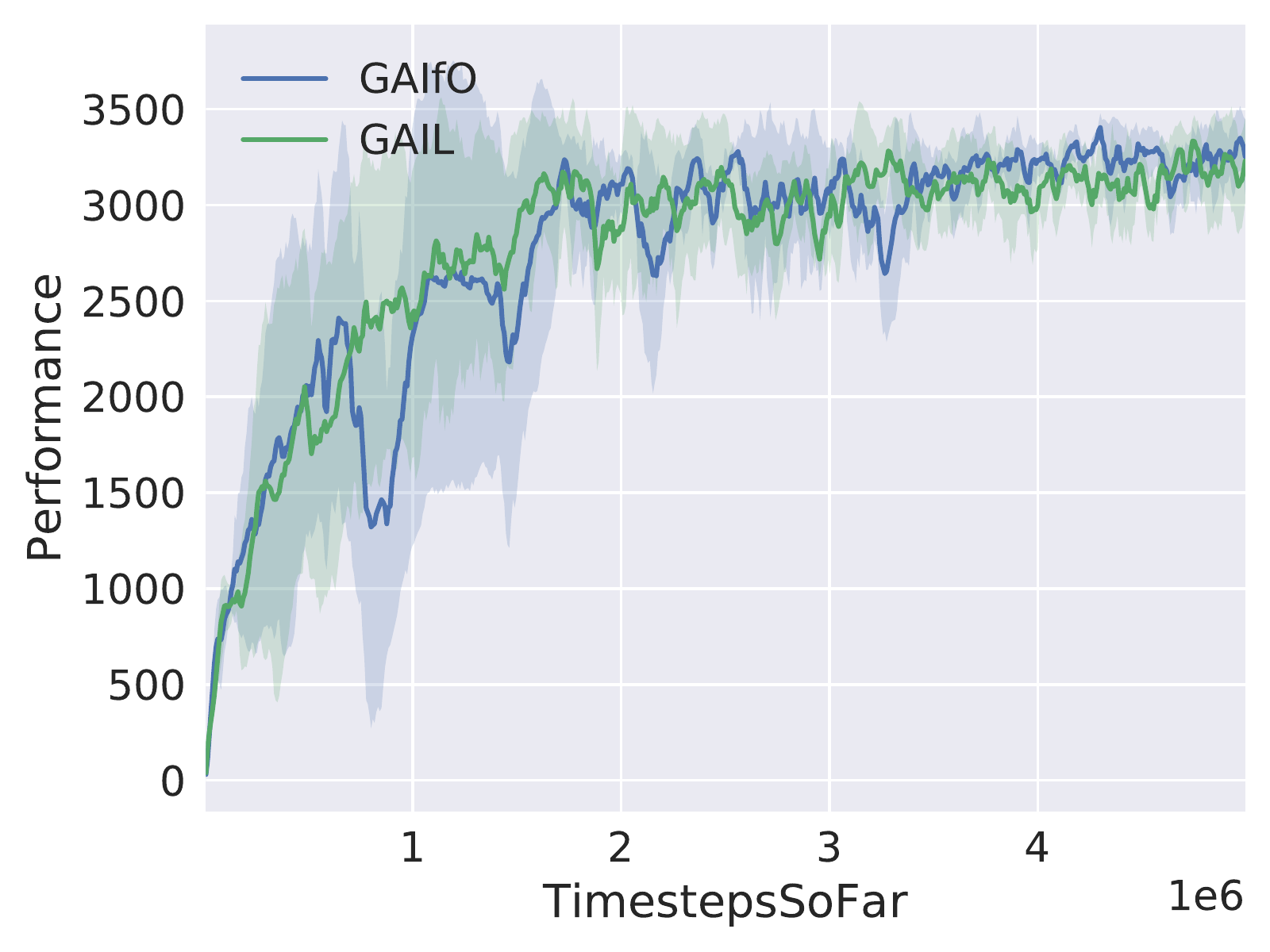}
    }
    \subfigure[Walker2d-v2]{
    %\centering
    \includegraphics[width=0.3\textwidth]{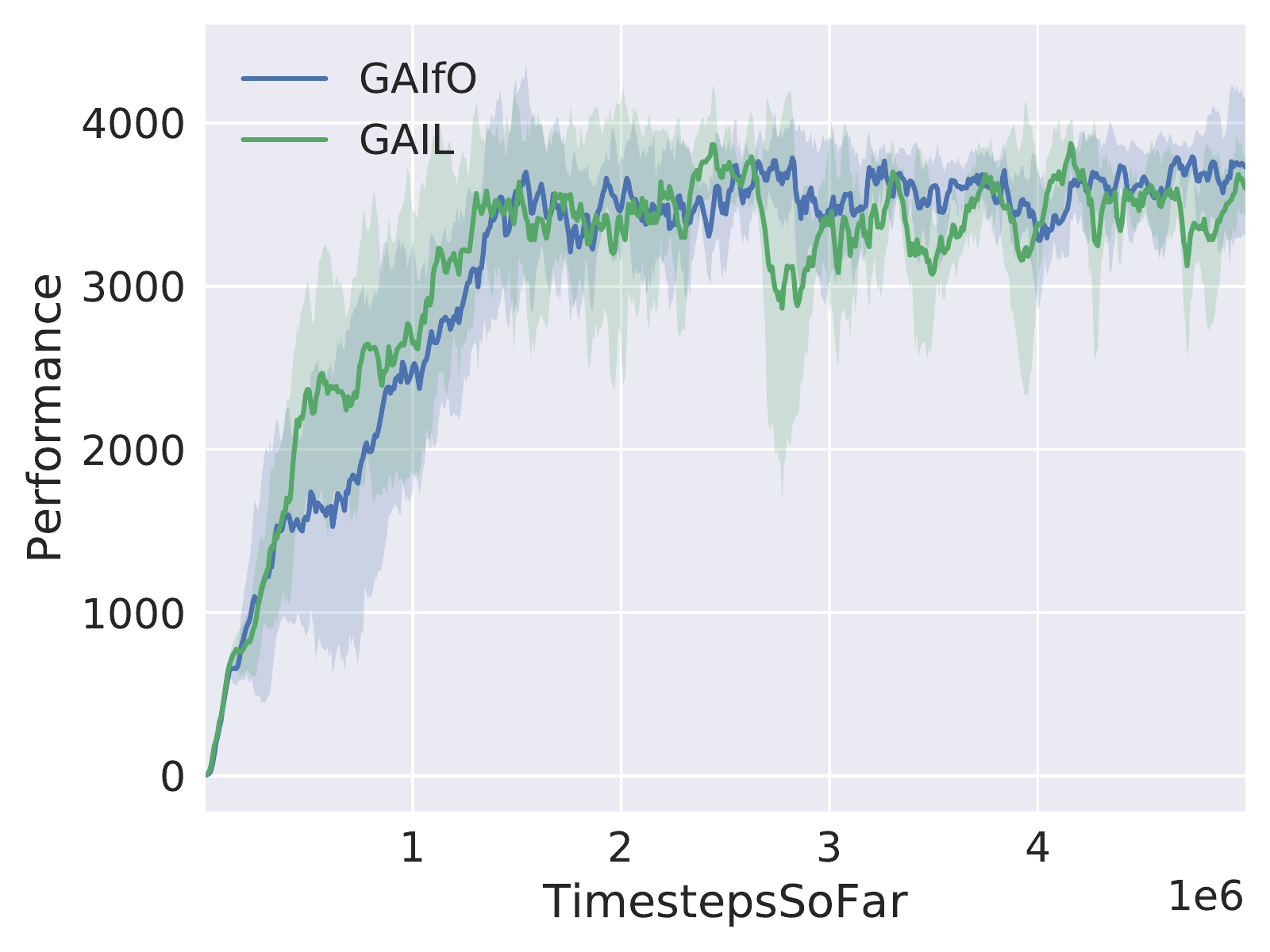}
    }
    \subfigure[HalfCheetah-v2]{
    %\centering
    \includegraphics[width=0.3\textwidth]{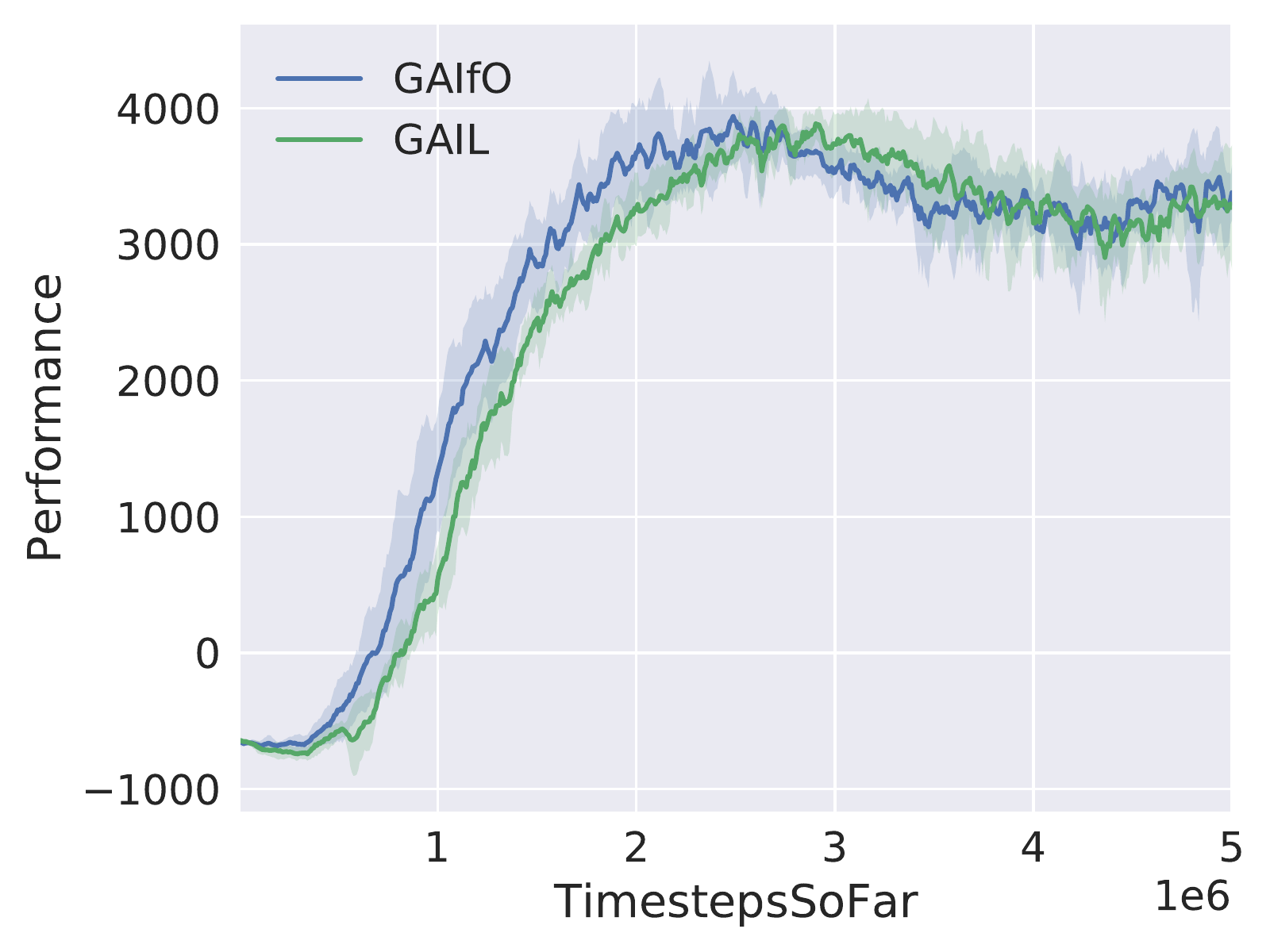}
    }
    \subfigure[Humanoid-v2]{
    %\centering
    \includegraphics[width=0.3\textwidth]{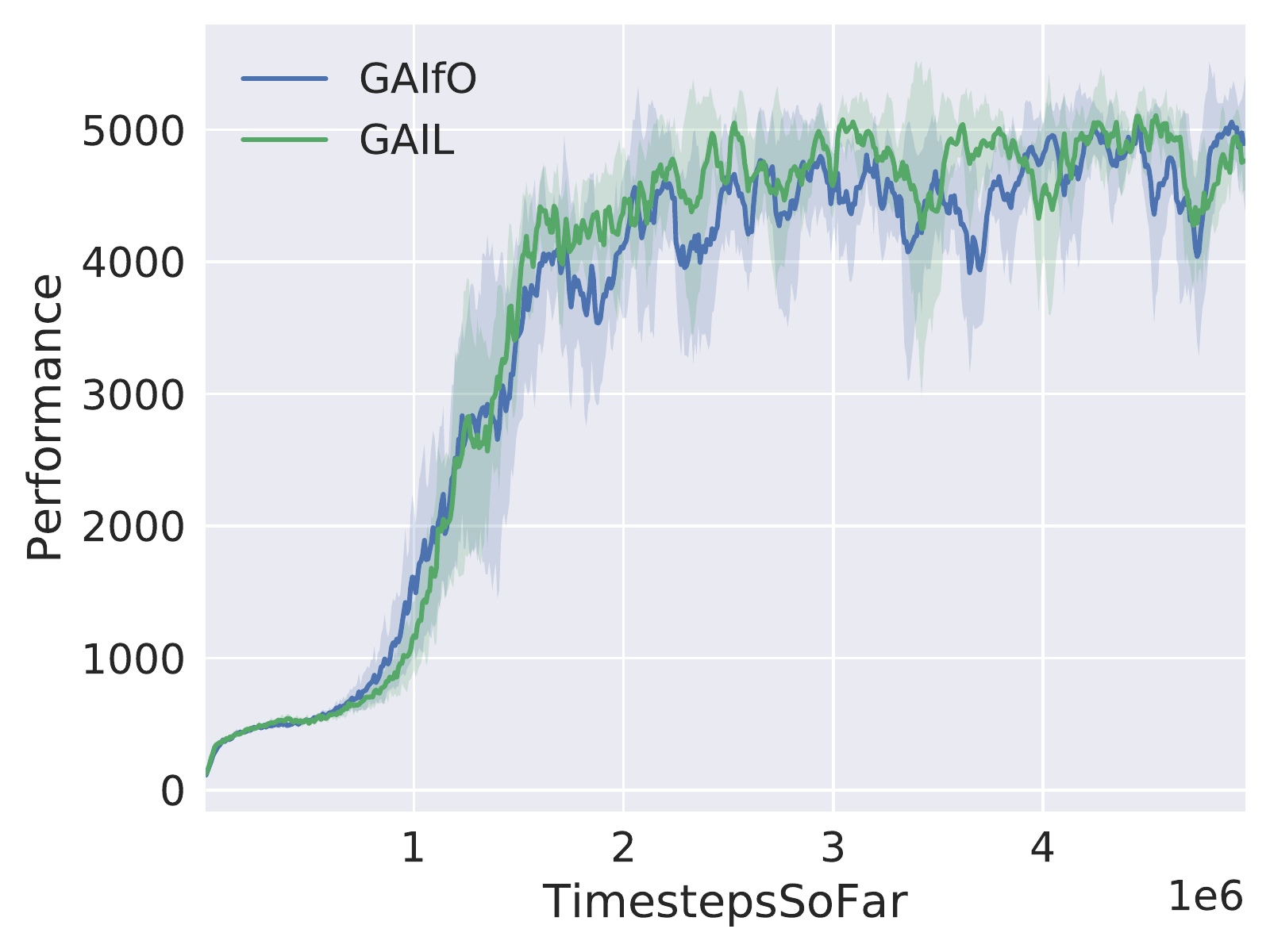}
    }
    \subfigure[HumanoidStandup-v2]{
    %\centering
    \includegraphics[width=0.3\textwidth]{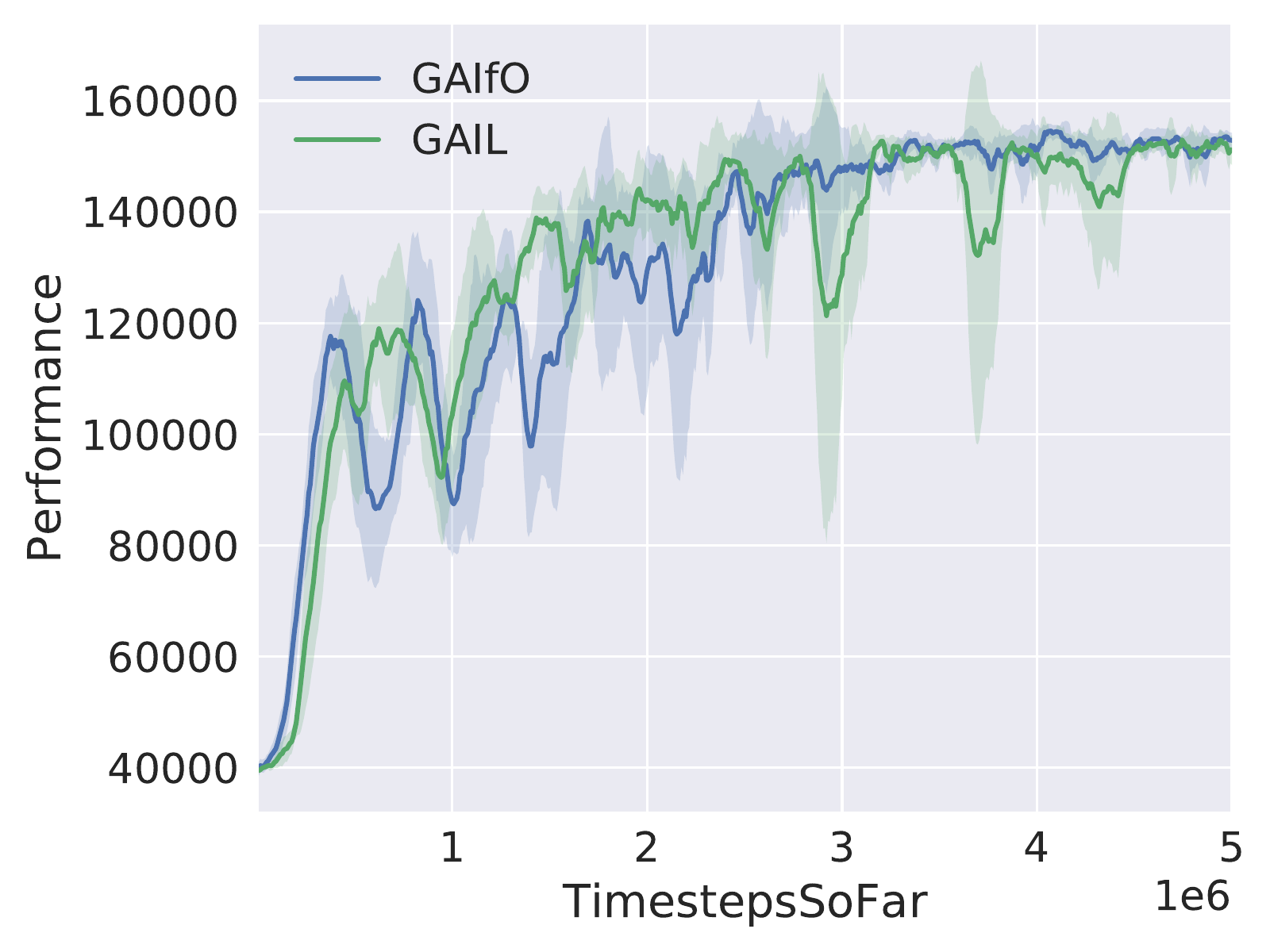}
    }
\caption{Performance of each approach on MuJoCo tasks. Performance is measured with average episode return and x-axis stands for time steps interacting with the environment. Each algorithm is evaluated with 5 random seeds.}
\label{learningcurve}
\end{figure*}

\begin{table*}[h!]
\centering
  \caption{Environments and overall performances of compared algorithms. The numbers before and after the plus or minus symbol are average episode reward and standard error, respectively.}
  \label{table2}
  \centering
  \begin{tabular}{c c c c}
    %\toprule
    %\cmidrule(r){1-4}
    \hline
    Environment     & Expert     & GAIL & GAIfO\\
    %\midrule
    \hline
    InvertedPendulum-v2 & 1000.0$\pm$0.0  & 995.4$\pm$9.2  & 992.7$\pm$13.1   \\
    %InvertedDoublePendulum-v2     & 9354.3$\pm$0.1 & 7166.4$\pm$530.2  & 8021.1$\pm$449.1  \\
    Hopper-v2     & 3491.8$\pm$37.3  & 3229.6$\pm$221.6 & 3251.6$\pm$195.5\\
    Walker2d-v2     & 3999.6$\pm$10.5  &  3600.9$\pm$201.2 & 3731.3$\pm$424.8\\
    HalfCheetah-v2     & 4993.4$\pm$65.0 & 3270.4$\pm$458.6  & 3381.6$\pm$202.8  \\
    Humanoid-v2     &  5600.4$\pm$10.0    & 4764.8$\pm$227.0 & 4896.9$\pm$507.1\\
    HumanoidStandup-v2   & 155525.8$\pm$2179.9      & 151093.4$\pm$2672.0 & 152898.4$\pm$1333.7\\
    \hline
    %\bottomrule
  \end{tabular}
\end{table*}

\subsection{Results}
%We now are able to present our simulation curves\textcolor{red}{why use the word simulation here? [We now present the learning curves of each method]}. 
In this subsection, we provide the qualitative and quantitative results of GAIL and GAIfO on MuJoCo tasks, which are shown in Fig. \ref{learningcurve} and Tab. \ref{table2}. Each task is trained with 5 random seeds and the episode cumulative reward is employed to evaluate the performance of both methods. All the other settings, including the hyper-parameters, are kept the same for fair comparisons. More details are presented in the Appendix. 
%Indeed, this requirement is not so necessary because comparing LfD and LfO with their own fine-tuned parameters is more reasonable. But we constrain the architecture and hyper-parameters to the same to demonstrate the little difference between these two methods. 
In Fig. \ref{learningcurve}, the mean and the standard deviation of the episode cumulative rewards are illustrated with a solid line and shaded area, respectively, while the numerical results for the expert, GAIL, and GAIfO are listed in Tab. \ref{table2}. From Fig. \ref{learningcurve} and Tab. \ref{table2}, it is clear that (1) both LfD and LfO can achieve the expert-level performance at the end of training; (2) LfO could achieve comparative performance to LfD, \emph{i.e.}, no performance gap between LfO and LfD is noticed. In addition, we adopt a third party open-sourced implementation of GAIL and GAIfO to validate their performance \cite{ota2020tf2rl}, whose learning curves are similar to ours and are presented in the Appendix. \par %Moreover, they have the capacity of mimicking the expert likely. \textcolor{red}{what does the previous sentence mean?}\par
%\textcolor{red}{the following paragraph seems to be redundant}
In summary, in deterministic robot tasks, LfO is almost equivalent to LfD, which further confirms our theoretical finding in Theorem \ref{theorem1}.

\subsection{Discussion and Suggestions}
In this subsection, we aim to provide some conjectures and discussions on why previous studies hold the point that there exists a performance gap between LfD and LfO. As far as we are concerned, two factors may affect the performance of LfO and LfD.
\begin{itemize}
  \item [1)] 
  It's well known that current reinforcement learning strategies are comparatively unstable and require elaborate implementation of the algorithms \cite{henderson2017deep,hutson2018artificial, dasagi2019ctrl}. Theorem \ref{theorem1} and Corollary \ref{corollary1} demonstrates that the optimizing targets for LfD and LfO in deterministic robot environments or robot environments with bounded randomness remain almost same, which proves that there would be almost no performance gap. However, in practice, many implementation factors may affect the final performance of these algorithms. Evidently, LfO is more difficult than LfD, since it needs to predict the right action from the expert trajectories. Thus, LfO is more likely to converge at a local optima compared to LfD if some implementation tricks in reinforcement learning or deep learning are not employed. 
  %During our experiments, we found that both GAIL and GAIfO are very sensitive to hyper-parameters (\emph{e.g.,} neural network architecture, learning rate, \emph{etc.}). 
  Specifically, the input normalization for the policy and value network plays a very important role among them. To illustrate the importance of input normalization, we further conduct an ablation experiment by training LfO with/without input normalization. As can be seen in Fig. \ref{discussion}, it is clear that the performance is boosted with input normalization. Though this technique helps improve the performance, it does not affect the core of LfO.
  %We blame this phenomenon for the improper implementation of GAIfO. Though, our Theorem \ref{theorem1} and Corollary \ref{corollary1} demonstrates the optimizing targets for both LfD and LfO in the deterministic robot environment or robot environments with bounded randomness remain the same, which theoretically proved that there would be no performance gap. However, in practice, may implementation factors may affect the final performance of these algorithms. %Since the previous studies fail to open source their codes and implementation details, we doubt that their codes were not able to take full advantage of LfO, resulting in the performance gap.
 
  \item [2)]
  Another possible reason is that the performance of LfO might be affected by the training instability of GAN \cite{gulrajani2017improved, zhang2019consistency}, since the exploration in LfO is more difficult than that in LfD. Suffering from the training problems of GAN (\emph{e.g.,} vanishing gradients, model collapse, \emph{etc.}), LfO may get stuck at a local optima and perform poorly. To further confirm this point and take full advantage of LfO, we adopt Spectral Normalization \cite{miyato2018spectral}, which is a commonly used technique to stabilize the GAN training. As shown in Fig. \ref{discussion}, after introducing Spectral Normalization, the performance of LfO and LfD are improved. The introduction of spectral normalization does not change the optimizing target and just helps stabilize the training process. 
\end{itemize}\par
In summary, we believe that the performance gap previously reported might be caused by implementation details or training un-stability rather than the inverse dynamics disagreement. Our theoretical analysis and experiments suggest that LfO is almost equivalent to LfD. To further promote research, we will open source our code and dataset to help the reproducibility.
\begin{figure}[htbp]
\centering
    \subfigure[Input Noramlization]{
    %\centering
    \includegraphics[width=0.4\textwidth]{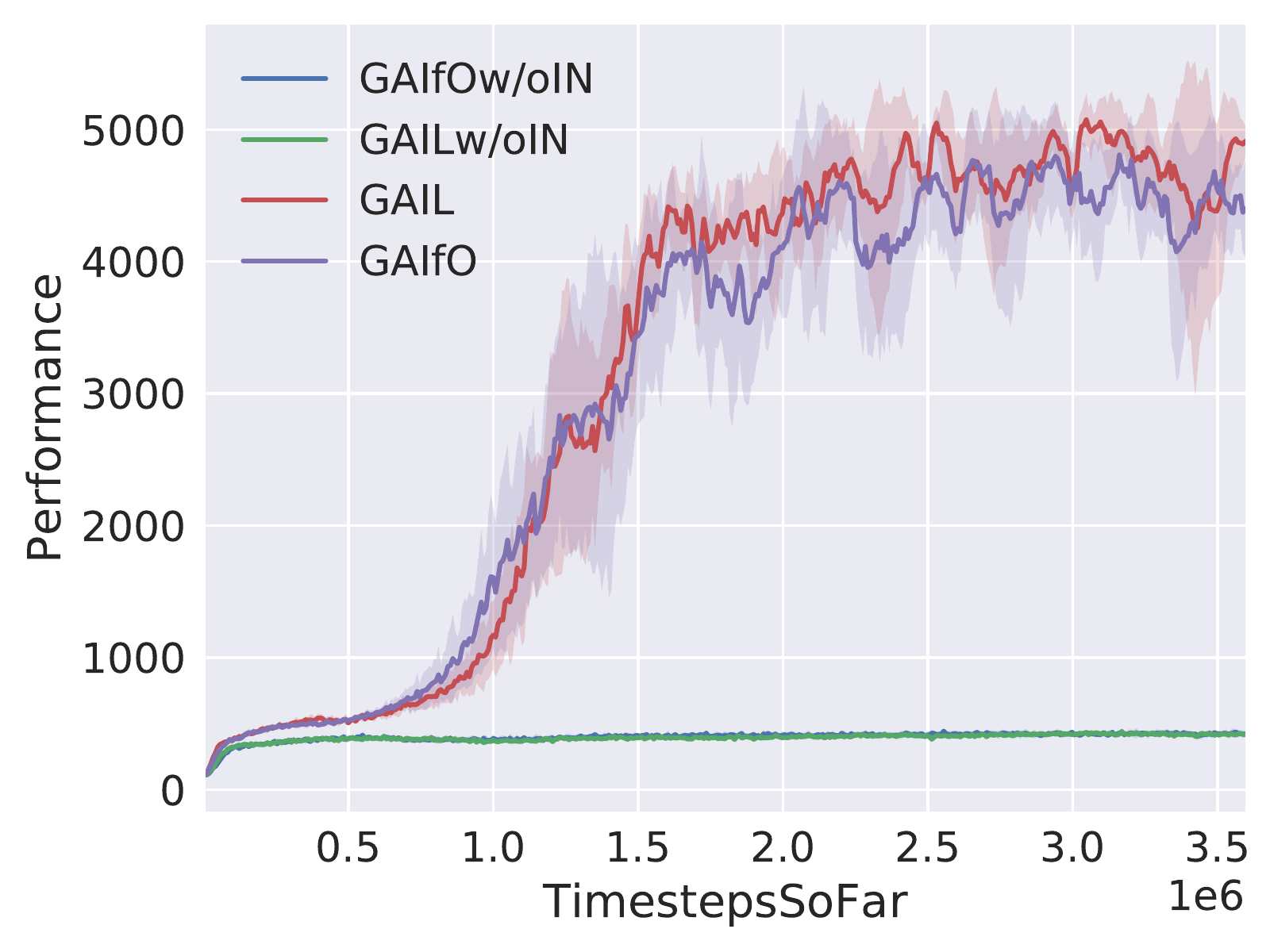}
    }
    \subfigure[Spectral Normalization]{
    %\centering
    \includegraphics[width=0.4\textwidth]{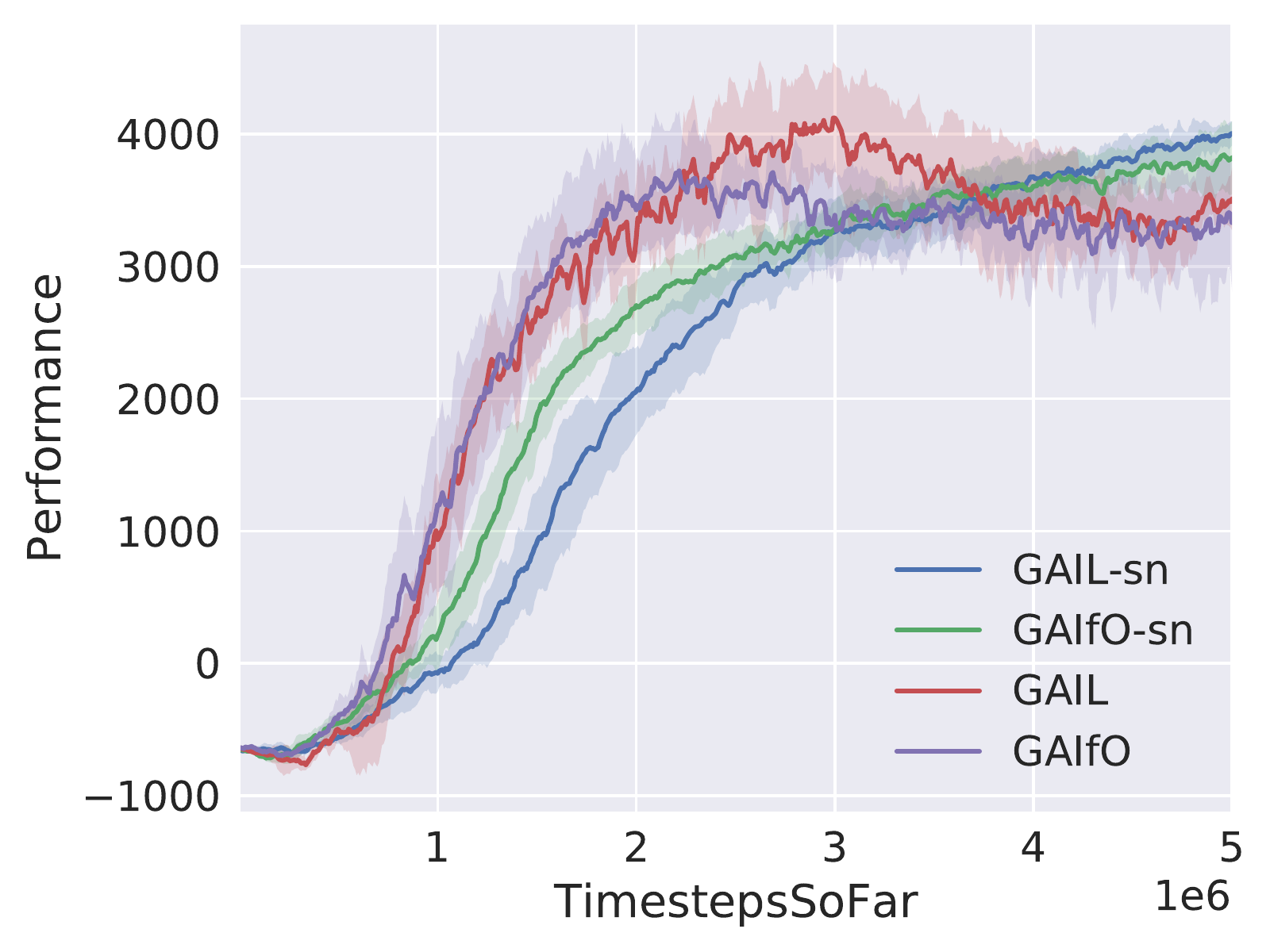}
    }
\caption{Impact of input normalization and spectral normalization on LfO and LfD. The performance measurement criterion is same as described in Fig \ref{learningcurve}. In (a), we test the impact of input normalization on GAIL and GAIfO in Humanoid-v2, in which GAIL and GAIfO uses input normalization in their policy and value networks while GAILw/oIN and GAIfOw/oIN do not; In (b), the learning curves of GAIL and GAIfO against their spectral noramlizaed versions GAIL-sn and GAIfO-sn in HalfCheetah-v2 are presented.}
\label{discussion}
\end{figure}
%Another factor would be the instability of . 
%since that the exploration is more difficult
%Hence, we doubt that . No GAN is to converge to the global optima.  
%two possible reasons:
%input normalization,
%spectral normalization. 
%multiply s'
%Try to discuss why your results does not match the past. They do not release the code. \par
%Some possible factors:
%bad implementation; lack of fine tune; . 

\section{Conclusions}
%LfO is a more appealing imitation learning strategy than LfD, since that LfO enjoys the advantage of training the imitator without the requirement of actions from the experts. However, the huge performance gap previously reported between LfO and LfD makes it a big drawback for LfO to be employed in practice.
In this paper, we delve into the difference between LfD and LfO specifically from two algorithms, GAIL and GAIfO, which are the two most commonly used imitation learning algorithms. From the perspective of control theory and inverse dynamics model, we prove that, for deterministic robot systems, the performance gap between LfO and LfD approaches zero, \emph{i.e.}, the almost equivalence between LfO and LfD is proved. To further relax the deterministic constraint and better adapt to the environment in practice, we consider the bounded randomness in the robot environment and prove that the optimizing targets for LfD and LfO remain almost same in a more generalized setting, which paves the way for the application of LfO in real-world. In other words, LfO is almost equivalent to LfD in deterministic robot environments or robot environments with bounded randomness. Extensive experiments on OpenAI MuJoCo tasks are conducted and empirically demonstrate that LfO achieves comparable performance to LfD.

Based on our studies, we provide the state-of-the-art implementation of GAIfO with the code and dataset open-sourced, which are beneficial for the imitation learning community. Our analysis gives a deep insight into imitation learning from observation and will for certain promote the applicability of LfO in real-world robot tasks. Future work would focus on the analysis of the difficulty for finding the right action with given state transitions for LfO.
%\subsection{References}
%\bibliographystyle{aaai21}
\section{Ethics}
Like most imitation learning algorithms, generative adversarial imitation learning discussed in our paper has the potential to be applied in industrial production and robots for daily life. But applying the algorithms in the researches of artificial intelligence to real-world applications is not an easy task. Our work extends the possible application areas of imitation learning, and to some extent, our theoretical results mitigate the obstacles for employing imitation learning methods in reality. Since these learning methods do not require humans to intervene, if they are successfully applied in practice fewer workers will be needed. For example, the stable walking of Atlas robot in Boston Dynamics is achieved with the efforts from many engineers who work on control, planning, and perception. If imitation learning algorithms could be employed to teach Atlas to walk like humans, Boston Dynamics would not need to hire so many engineers. However, on the other hand, imitation learning can help increase industrial productivity and reduce production costs simultaneously, which will create large social values that most people can benefit from. 

%{\small
%\bibliographystyle{ieee_fullname}
%\bibliography{egbib}
%}

%\input{main.bbl}
\bibliographystyle{ieee_fullname}
\bibliography{egbib}
%\bibliography{main}

\clearpage
\onecolumn
\setcounter{theorem}{0}
\setcounter{corollary}{0}
\section*{Appendix——On the Guaranteed Almost Equivalence between Imitation Learning from Observation and Demonstration}
In the appendix, we provide some supplementary material which supports our claims that the almost equivalence between LfO and LfD holds in robot environments under deterministic model and bounded randomness both theoretically and empirically. The contents in the appendix can be divided into two parts. In the first part, we present the detailed proof process of Theorem \ref{theorem1} and Corollary \ref{corollary1} to demonstrate that the almost equivalence is guaranteed in robot environments under deterministic model and bounded randomness, respectively. In the other part, we further validate our theoretical analysis by providing additional experimental results, including how to generate the expert data and conduct some additional experiments to demonstrate that LfO can perform as well as LfD.   
\section*{A. Proof}
In this section, we provide detailed proof of Theorem \ref{theorem1} and Corollary \ref{corollary1}. 
\subsection*{A.1 Theorem 1 and Proof}
\begin{theorem}
Under Assumption \ref{assum:samedynamics}, for deterministic robot controlled plant, the inverse dynamics disagreement between the learner policy and the expert policy approaches zero.
\end{theorem}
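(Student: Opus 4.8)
The plan is to show that the Kullback--Leibler term in Eq.~\eqref{equation:detailinversedynamicsdis} vanishes (up to a discretization error) by proving that the two conditional inverse dynamics densities $\rho_{\pi}(a|s,s')$ and $\rho_{E}(a|s,s')$ coincide on the support of $\rho_{\pi}(s,a,s')$. First I would exploit the zero--one structure of Lemma~\ref{lemma:dynamics}: along any trajectory actually generated by interacting with the environment we have $T(s'|s,a)=1$, so in Definition~\ref{definition:inversedynamicsmodel} the denominator integral collapses to an integral of the policy over the feasible set $A_f=\{\bar a\in A:\ T(s'|s,\bar a)=1\}$ and the numerator reduces to $\pi_{\theta}(a|s)$ (resp.\ $\pi_{E}(a|s)$), giving the simplified forms~\eqref{euqation:inversemodel0} and~\eqref{euqation:inversemodel1}. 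A sufficient condition for $\rho_{\pi}(a|s,s')=\rho_{E}(a|s,s')$ at every feasible triple, \emph{independently of the unknown expert policy}, is then that $A_f$ be a singleton: if $|A_f|=1$ both densities equal $1$ on $A_f$ and the integrand $\log(\rho_{\pi}/\rho_{E})$ in Eq.~\eqref{equation:detailinversedynamicsdis} is $\log 1=0$.

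The heart of the argument is therefore the uniqueness claim: for a feasible state transition $(s,s')$ there is exactly one action realizing it. Here I would pass from MDP states to generalized coordinates via Remark~\ref{remark:equalqa}, writing $s\equiv(q_t,\dot q_t)$ and $s'\equiv(q_{t+1},\dot q_{t+1})$ and identifying the control input $u$ in Lemma~\ref{lemma:lagrange} with the action $a$. Applying a one-step Euler discretization to the Euler--Lagrange equation~\eqref{equation:Euler-La}, the acceleration is pinned down by the recorded change in generalized velocity, $\ddot q_t\approx(\dot q_{t+1}-\dot q_t)/\Delta t$, a quantity that depends only on $s$ and $s'$. Since the inertia matrix $M(q)$, the Coriolis/centripetal term $C(q,\dot q)$ and the gravity term $G(q)$ are functions of the \emph{current} generalized coordinates alone, substituting $\ddot q_t$ into~\eqref{equation:Euler-La} determines $u$: if $u_0$ and $u_1$ both effect the transition $(s,s')$ then $u_0\approx u_1$, contradicting $u_0\neq u_1$ in the limit of vanishing step size. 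Translating back through Remark~\ref{remark:equalqa} yields $|A_f|=1$, hence $\rho_{\pi}(a|s,s')=\rho_{E}(a|s,s')$ on the support of $\rho_{\pi}(s,a,s')$; plugging this into Eq.~\eqref{equation:detailinversedynamicsdis} makes the inverse dynamics disagreement approach zero, which by the decomposition of Yang et al.\ recalled in the Preliminaries is exactly the claimed almost equivalence between LfD and LfO.

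The main obstacle I anticipate is making the word ``almost'' rigorous. Uniqueness of $u$ is exact only in continuous time; under the finite-step integration implicit in the discrete MDP, two control inputs producing the same recorded transition need agree only up to an $O(\Delta t)$ (more carefully, local truncation $O(\Delta t^{2})$) error, so one must argue that this error propagates to an $O(\Delta t)$ bound on $|\rho_{\pi}(a|s,s')-\rho_{E}(a|s,s')|$ and thence to a vanishingly small KL term, rather than a genuine zero. A secondary subtlety is measure-theoretic: the feasible transitions form a lower-dimensional subset of $S\times S$, so ``equal everywhere'' must be read as ``equal $\rho_{\pi}(s,a,s')$-almost everywhere,'' which is all Eq.~\eqref{equation:detailinversedynamicsdis} needs; I would state this explicitly to avoid suggesting the densities agree on all of $S\times A\times S$. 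Finally, I would check that the identification $u=a$ and the invertibility of the $s\leftrightarrow(q,\dot q)$ correspondence in Remark~\ref{remark:equalqa} really hold for the robot plants considered (e.g.\ that the actuators span the generalized-force directions), since degenerate actuation would enlarge $A_f$ and break the singleton conclusion.
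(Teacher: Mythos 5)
Your proposal follows essentially the same route as the paper's own proof: the deterministic zero--one dynamics collapse the inverse dynamics models to Eqs.~\eqref{euqation:inversemodel0}--\eqref{euqation:inversemodel1}, the singleton condition $|A_f|=1$ is identified as the only expert-independent sufficient condition, and uniqueness of the action is obtained from the Euler--Lagrange equation~\eqref{equation:Euler-La} via a one-step Euler discretization pinning down $\ddot q$ from $(s,s')$ (the paper phrases this as a contradiction using invertibility of $M(q)$, while you read $u$ off directly, which is the same argument). Your remarks on the $O(\Delta t)$ source of the ``almost,'' the almost-everywhere reading, and the $u=a$ identification are sensible clarifications of points the paper leaves implicit, but they do not change the approach.
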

\begin{proof}
%Based on this equation, both inverse dynamics model for the learner and the expert could be rewritten as 
%\begin{align}
    %\rho(a|s,s') = \frac{T (s'|s,a)\pi(a|s)}{\int_{A} T (s'|s,\bar a)\pi(\bar a|s) d\bar a},
%\end{align}
%in which the subscript $\theta$ and $E$ denoting the parameters for the imitator policy and the demonstrator policy respectively is omitted. \par
%In the proof, the deterministic property is used to simplify the environment dynamics and Lagrangian mechanics \cite{brizard2nd} in control theory is introduced to illustrate the existence and uniqueness of the action $a$ for a potential state transition $(s,s')$; Subsequently, we prove that the inverse dynamics models of LfD and LfO equal everywhere combining the deterministic property and the uniqueness of action. \par
%As discussed above, if the values of the inverse dynamics model for the learner and the expert equal everywhere, then the Kullback–Leibler divergence between these two distributions would be zero, which implies that the inverse dynamics disagreement is zero. Accordingly, there should be no performance gap between LfD and LfO theoretically. Hence, we will conduct an investigation on the disagreement term in depth with the characteristics of deterministic robot systems. \par
%, we use category discussion regarding the dynamics transition probability.\par
%For state-action-state tuple $(s,a,s')$ whose transition probability $T(s'|s,a)=0$, the relationship of the two inverse dynamics models is shown as follows:
%\begin{align}
    %\rho_{\pi}(a|s,s')=\rho_{E}(a|s,s')=0.
%\end{align}
Our analysis relies on the deterministic property and the Euler-Lagrange dynamical equation. We ﬁrst use the deterministic character to simplify inverse dynamics models
and obtain the sufﬁcient condition to ensure the equality of inverse dynamics models between LfO and LfD. Subsequently, we employ the Euler-Lagrange dynamical equation to prove the uniqueness of the action for a feasible state transition $(s,s')$. \par

\textit{Sufﬁcient Condition}. For a deterministic system, the state-action-state tuples $(s,a,s')$ recorded by interacting with the environment satisfy that $T(s'|s,a)=1$. The environment dynamics $T(s'|s,a)$ is one of the factors which may lead to a mismatch of inverse dynamics models. Assume that actions in set $A$ can be divided into two subsets, in which the actions belonging to the first subset can transfer state $s$ to $s'$ while the other could not. To better represent the first action subset, we use the symbol $A_{f}$ in which the subscript $f$ stands for feasibility. Thus, the inverse dynamics models for the imitator and the demonstrator could be simplified as follows:
\begin{align} 
    &\rho_{\pi}(a|s,s')=\frac{\pi_{\theta}(a|s)}{\int_{A_f} \pi_{\theta}(\bar a|s) d\bar a},\label{euqation:app_inversemodel0}\\
    &\rho_{E}(a|s,s')=\frac{\pi_{E}(a|s)}{\int_{A_f} \pi_{E}(\bar a|s) d\bar a}.\label{euqation:app_inversemodel1}
\end{align}
In practice, we have no access to the expert policy but just some state action samples recorded by executing the expert policy. In other words, the policy of the expert is unknown and unavailable. Therefore, the probability of taking actions using the expert policy is undiscovered. As a result, it is not possible for us to ensure the equality of $\rho_{\pi}(a|s,s')$ and $\rho_{E}(a|s,s')$ by controlling the action taking probabilities of the learner policy. The only way to guarantee $\rho_{\pi}(a|s,s')=\rho_{E}(a|s,s')$ is that $\left| A_f\right|=1$, meaning that there is only one element in the set $A_f$. In other words, for a feasible state transition $(s,s')$, there is a unique action $a$ which can transfer the current state $s$ to the next $s'$ by executing this action. Thus, the inverse dynamics models are obtained as:
\begin{align*}
    \rho_{\pi}(a|s,s')=\frac{\pi_{\theta}(a|s)}{\pi_{\theta}(a|s)}=1=\frac{\pi_{E}(a|s)}{\pi_{E}(a|s)}=\rho_{E}(a|s,s').
\end{align*}
It should be noted that, both policies are in the Gaussian form, so the probability of taking any action is not zero, which avoids singularities in the above equation. \par
Now, the equality of $\rho_{\pi}(a|s,s')$ and $\rho_{E}(a|s,s')$ turns into whether the unique existence of action $a$ holds for a feasible state transition $(s,s')$. We believe that this condition is true in a deterministic robot environment.  \par 

\textit{Uniqueness of the Action}. According to Remark \ref{remark:equalqa}, we would focus on whether there exist more than one control inputs $u$ that can transfer the generalized coordinates from $(q_t,\dot q_t)$ to $(q_{t+1},\dot q_{t+1})$ given the Euler-Lagrange dynamical equation, where $t$ is the current timestep. We assume that there exist at least two possible control inputs $u_0$ and $u_1$ that can bring the current state $(q_t,\dot q_t)$ to the next $(q_{t+1},\dot q_{t+1})$, and $u_0 \neq u_1$. When applying $u$ to the robot, accelerations $\ddot q$ are generated with Eq. \eqref{equation:Euler-La}. According to the initial speed value $\dot q_t$ and by integrating the acceleration, the increment for speed can be determined. Similarly, the increment for the position could also be obtained \cite{butcher2008numerical}:
\begin{align}
    q_{t+1}=q_{t}+\int_{t}^{t+1} \dot q dt\approx q_{t}+\Delta_t \times \dot q \label{equation:evolution0},\\
    \dot q_{t+1}=\dot q_{t}+\int_{t}^{t+1} \ddot q dt\approx \dot q_{t}+\Delta_t \times \ddot q, \label{equation:evolution1}
\end{align}
where $\Delta_t$ is the time interval between timestep $t+1$ and $t$. Eqs. \eqref{equation:evolution0} and \eqref{equation:evolution1} identify the variation process of the generalized coordinates. For the robot dynamics both in simulation and in real applications, the control frequency of the controller remains relatively high, indicating that the time interval is relatively small \cite{hwangbo2019learning,peng2020learning,apgar2018fast}. Thus, we could assume that during the integral time interval, neither $\dot q$ nor $\ddot q$ would change. Therefore, for a given generalized coordinates change, a unique $\ddot q_{uni}$ can be calculated. Combining with the assumption that both $u_0$ and $u_1$ could produce the same state transition, the control inputs $u_0$ and $u_1$ can then generate the same acceleration ${\ddot q}_{uni}$ with the current state $(q_t,\dot q_t)$. %Replacing $u$ in Eq. \eqref{equation:Euler-La}, we get:
%\begin{align}
%    &M(q_t)\ddot q_{uni} + C(q_t,\dot q_t)\dot q_t + G(q_t)= u_0 \label{equation:generateuniq0},\\
%    &M(q_t)\ddot q_{uni} + C(q_t,\dot q_t)\dot q_t + G(q_t)= u_1. \label{equation:generateuniq1}
%\end{align} 
According to the definition of the Euler-Lagrange dynamical equation, the inertia matrix $M(q)$ is positive definite \cite{westervelt1st,nanos2012cartesian}, meaning that $M(q)$ is invertible. Hence, by transforming Eqs. \eqref{equation:Euler-La} with control input $u_0$ and $u_1$, the following is presented:
\begin{align}
    \ddot q_{uni} \approx M^{-1}(q_t)u_i- M^{-1}(q_t)C(q_t,\dot q_t)\dot q_t -M^{-1}(q_t) G(q_t) \label{equation:generateuniq_inverse0},%\\
    %&\ddot q_{uni}= M^{-1}(q_t)u_1- M^{-1}(q_t)C(q_t,\dot q_t)\dot q_t -M^{-1}(q_t) G(q_t). \label{equation:generateuniq_inverse_1}
\end{align}
where $i\in \{1,2\}$. By utilizing the uniqueness of $\ddot q$, subtracting Eq. \eqref{equation:generateuniq_inverse0} with index $i=0,1$, we obtain,
%\begin{align}
%\begin{split}
%    &0= M^{-1}(q_t)u_0 - M^{-1}(q_t)u_1\\&- M^{-1}(q_t)C(q_t,\dot q_t)\dot q_t -M^{-1}(q_t) G(q_t)
%    \\&+ M^{-1}(q_t)C(q_t,\dot q_t)\dot q_t+M^{-1}(q_t) G(q_t).
%\end{split}
%\end{align}
%Eliminating redundant terms, we get, 
\begin{align}\label{equation:equalu}
\begin{split}
    0& \approx M^{-1}(q_t)u_0 - M^{-1}(q_t)u_1\\
    & \approx M^{-1}(q_t)(u_0 - u_1).
\end{split}
\end{align}
Since $M(q_t)$ is invertible, $M^{-1}(q_t)$ is also invertible \cite{westervelt1st,nanos2012cartesian}. Thus, Eq. \eqref{equation:equalu} could be further simplified as:
\begin{align}
    u_0 \approx u_1.
\end{align}
This result contradicts the assumption that there exist at least two different control inputs that can achieve the same state transition. Hence, it means that given a system generalized coordinate change, there is only one corresponding control input $u_{uni}$. Consequently, based on Remark \ref{remark:equalqa}, the existence and uniqueness of action $a$ for a feasible state transition $(s,s')$ have been proved. Returning back to Eqs. \eqref{euqation:app_inversemodel0} and \eqref{euqation:app_inversemodel1}, we can guarantee that the inverse dynamics models for the learner and the expert equal everywhere. In conclusion, for deterministic robot tasks, the almost equivalence between LfD and LfO holds.
\end{proof}
\subsection*{A.2 Corollary 1 and Proof}
\begin{corollary}
When the randomness of the environment dynamics is bounded by a small number $\epsilon$  and policies are
Lipshitz continuous, the inverse dynamics disagreement between the learner policy and the expert policy approaches zero.
\end{corollary}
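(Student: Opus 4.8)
The plan is to mirror the argument of Theorem~\ref{theorem1}, but with the exact collapse of the inverse dynamics model to a single action replaced by an \emph{approximate} collapse whose error is controlled by the randomness bound $\epsilon$. Concretely, I would show that under bounded randomness the inverse dynamics model $\rho_{\pi}(a|s,s')$ equals a policy-independent quantity up to a multiplicative factor $1+O(\epsilon)$; the same estimate then holds for $\rho_{E}(a|s,s')$, so the two agree up to $O(\epsilon)$, and the disagreement in Eq.~\eqref{equation:detailinversedynamicsdis} is $O(\epsilon)\to 0$.

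First I would make the feasible action set precise. Under bounded randomness $T(s'|s,a)>0$ exactly when $|s'-s'_d(s,a)|<\epsilon/2$, where $s'_d(s,a)$ is the deterministic successor; for a feasible pair $(s,s')$ set $A_f(s,s')=\{a:|s'-s'_d(s,a)|<\epsilon/2\}$, which is nonempty and on which $T(s'|s,\cdot)$ is positive, so the integral in the denominator of Definition~\ref{definition:inversedynamicsmodel} reduces to an integral over $A_f$. The geometric heart of the proof is the estimate $\mathrm{diam}\big(A_f(s,s')\big)=O(\epsilon)$. This upgrades the uniqueness step of Theorem~\ref{theorem1} to a quantitative statement: via Remark~\ref{remark:equalqa} and Eqs.~\eqref{equation:Euler-La}, \eqref{equation:evolution0}, \eqref{equation:evolution1}, the forward map $a=u\mapsto s'_d$ is, to leading order in the integration step $\Delta_t$, the affine map whose linear part is $\Delta_t M^{-1}(q_t)$; since $M(q_t)$ is positive definite this part is invertible, hence the inverse map is Lipschitz with constant of order $\|M(q_t)\|/\Delta_t$, and $|s'-s'_d(s,a)|<\epsilon/2$ forces $a$ into a ball of radius $O(\epsilon)$ around the unique deterministic-feasible action $a^{*}(s,s')$.

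Next I would use Lipschitz continuity of the policies. Since $\pi_{\theta}$ and $\pi_{E}$ are Lipschitz in $a$ and (being Gaussian) bounded away from $0$ on the relevant localized region, for every $\bar a\in A_f(s,s')$ we get $\pi_{\theta}(\bar a|s)=\pi_{\theta}(a^{*}|s)\,(1+O(\epsilon))$ uniformly, and likewise for $\pi_{E}$. Plugging into Definition~\ref{definition:inversedynamicsmodel},
\begin{align*}
\rho_{\pi}(a|s,s')&=\frac{T(s'|s,a)\,\pi_{\theta}(a|s)}{\int_{A_f}T(s'|s,\bar a)\,\pi_{\theta}(\bar a|s)\,d\bar a}\\
&=\frac{T(s'|s,a)\,\pi_{\theta}(a^{*}|s)\,(1+O(\epsilon))}{\pi_{\theta}(a^{*}|s)\,(1+O(\epsilon))\int_{A_f}T(s'|s,\bar a)\,d\bar a}\\
&=\rho_0(a|s,s')\,(1+O(\epsilon)),
\end{align*}
where $\rho_0(a|s,s'):=T(s'|s,a)/\int_{A_f}T(s'|s,\bar a)\,d\bar a$ is independent of the policy; the identical computation with $\pi_{E}$ yields $\rho_{E}(a|s,s')=\rho_0(a|s,s')\,(1+O(\epsilon))$. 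Hence $\rho_{\pi}(a|s,s')/\rho_{E}(a|s,s')=1+O(\epsilon)$, so $\log\big(\rho_{\pi}(a|s,s')/\rho_{E}(a|s,s')\big)=O(\epsilon)$, and by Eq.~\eqref{equation:detailinversedynamicsdis}
\begin{align*}
\mathbb{D}_{KL}\big(\rho_{\pi}(a|s,s')||\rho_{E}(a|s,s')\big)&=\int_{S\times A\times S}\rho_{\pi}(s,a,s')\,\log\frac{\rho_{\pi}(a|s,s')}{\rho_{E}(a|s,s')}\,ds\,da\,ds'\\
&=O(\epsilon)\longrightarrow 0\quad\text{as }\epsilon\to 0.
\end{align*}

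I expect the main obstacle to be the first step: converting the qualitative uniqueness of Theorem~\ref{theorem1} into the uniform bound $\mathrm{diam}(A_f)=O(\epsilon)$. This requires the inverse of the forward dynamics map to have a modulus of continuity bounded independently of the configuration --- i.e.\ $\|M(q)\|$ bounded above and $\Delta_t$ bounded below on the reachable set --- and it requires checking that the Euler-integration errors in Eqs.~\eqref{equation:evolution0}--\eqref{equation:evolution1} are themselves negligible compared with $\epsilon$, so that they cannot dominate the estimate. A secondary technical point is keeping the policy densities uniformly bounded below on $A_f$, so that the $(1+O(\epsilon))$ factors in numerator and denominator cannot blow up when they are cancelled; the Gaussian-policy assumption inherited from Theorem~\ref{theorem1} handles this once the analysis is localized around $a^{*}(s,s')$.
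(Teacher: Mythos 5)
Your proposal is correct and follows essentially the same route as the paper's proof: localize the inverse dynamics model to the small feasible action set, use Lipschitz continuity (and positivity) of the policies to cancel the policy factor up to a ratio tending to one, so that both $\rho_{\pi}(a|s,s')$ and $\rho_{E}(a|s,s')$ reduce to the same dynamics-only quantity $T(s'|s,a)/\int_{A_f}T(s'|s,\bar a)\,d\bar a$ and the KL divergence vanishes. The only deviations are minor or strengthening: you replace the paper's Mean Value Theorem extraction of the policy from the denominator by a uniform $1+O(\epsilon)$ multiplicative bound, and you actually derive $\mathrm{diam}(A_f)=O(\epsilon)$ from the Euler--Lagrange forward map, whereas the paper simply posits a feasible interval of small width $\delta$ related to $\epsilon$.
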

\begin{proof}
For a given state transition $(s,s')$, there should be a small range of actions $A_f=\{a|a\in(a-\frac{\delta}{2},a+\frac{\delta}{2})\}$ which may transit $s$ to $s'$. The parameter $\delta$ is related to $\epsilon$ and assumed to be relatively small. Since that the environment dynamics are not deterministic and the existence of a unique action $a$ for a state transition $(s,s')$ does not hold, we could not simplify the inverse dynamics models as follows: 
\begin{align} 
    &\rho_{\pi}(a|s,s')=\frac{\pi_{\theta}(a|s)}{\int_{A_f} \pi_{\theta}(\bar a|s) d\bar a},\label{euqation:inversemodel00}\\
    &\rho_{E}(a|s,s')=\frac{\pi_{E}(a|s)}{\int_{A_f} \pi_{E}(\bar a|s) d\bar a}.\label{euqation:inversemodel11}
\end{align}
Rewriting the complete inverse dynamics model for the learner policy as follows:
\begin{align}\label{equation:fulloccpmeasure}
\begin{split}
      \rho_{\pi}(a|s,s')=\frac{T (s'|s,a)\pi_{\theta}(a|s)}{\int_{A_f} T(s'|s,\bar a)\pi_{\theta}(\bar a|s) d\bar a}
      =\frac{T (s'|s,a)\pi_{\theta}(a|s)}{\int_{a-\frac{\delta}{2}}^{a+\frac{\delta}{2}} T(s'|s,\bar a)\pi_{\theta}(\bar a|s) d\bar a}.
\end{split}      
\end{align}
It is clear from Eq. \eqref{equation:fulloccpmeasure} that, with the existence of the randomness, the inverse dynamics model involves both the environment dynamics and the probability of the decision policy, making it difficult to derive the relationship between $\rho_{\pi}(a|s,s')$ and $\rho_{E}(a|s,s')$. In addition, it is hard to eliminate the probability of taking a given action $a$ in the denominator and numerator of the inverse dynamics model. To tackle this, we further assume that the policy is continuous and differentiable. Therefore, Eq. \eqref{equation:fulloccpmeasure} could be simplified by extracting the probability of the policy using the Mean Value Theorem \cite{rudin1964principles}:  
\begin{align}\label{equation:pioccuwithXi}
\begin{split}
      \rho_{\pi}(a|s,s')
      &=\frac{\pi_{\theta}(a|s) T (s'|s,a)}{\pi_{\theta}(\widetilde a|s) \int_{a-\frac{\delta}{2}}^{a+\frac{\delta}{2}} T(s'|s,\bar a) d\bar a}\\
      &=\frac{\pi_{\theta}(a|s)}{\pi_{\theta}(\widetilde a|s)}\frac{ T (s'|s,a)}{\int_{a-\frac{\delta}{2}}^{a+\frac{\delta}{2}} T(s'|s,\bar a) d\bar a}\\
      &=\Xi_{\pi} \frac{ T (s'|s,a)}{\int_{a-\frac{\delta}{2}}^{a+\frac{\delta}{2}} T(s'|s,\bar a) d\bar a},
\end{split}
\end{align}
where $\widetilde a \in A_f$ and $\Xi_{\pi}=\pi_{\theta}(a|s)/\pi_{\theta}(\widetilde a|s)$. Assume that the Lipschitz constant for the policy is $L_{\theta}$ such that,
\begin{align}
      \| \pi_{\theta}(a|s)-\pi_{\theta}(\widetilde a|s) \| \leq L_{\theta} \|a - \widetilde a \| \leq \frac{1}{2} L_{\theta} \abs{\delta}.
\end{align}
Accordingly, the inequality for $\pi_{\theta}(a|s)$ and $\pi_{\theta}(\widetilde a|s)$ holds,
\begin{align}
     \pi_{\theta}(a|s) - \frac{1}{2} L_{\theta} \abs{\delta} \leq \pi_{\theta}(\widetilde a|s) \leq \pi_{\theta}(a|s) + \frac{1}{2} L_{\theta} \abs{\delta}.
\end{align}
Thus, we can get the bound of $\Xi_{\pi}$:
\begin{align}
     \frac{\pi_{\theta}(a|s)}{\pi_{\theta}(a|s) + \frac{1}{2} L_{\theta} \abs{\delta}} \leq \Xi_{\pi} \leq \frac{\pi_{\theta}(a|s)}{\pi_{\theta}(a|s) - \frac{1}{2} L_{\theta} \abs{\delta}}.
\end{align}
Since that $\delta$ is relatively small, $L_{\theta} \abs{\delta}$ approaches zero,  $\Xi_{\pi}$ approaches one. Consequently, the inverse dynamics model would essentially be determined by the forward dynamics model $T(s'|s,a)$. Since both the learner and the expert share the same environment dynamics, we can obtain,
\begin{align}\label{equation:EoccuwithXi}
\begin{split}
      \rho_{E}(a|s,s')
      =\Xi_{E} \frac{ T (s'|s,a)}{\int_{a-\frac{\delta}{2}}^{a+\frac{\delta}{2}} T(s'|s,\bar a) d\bar a},
\end{split}
\end{align}
where $\Xi_{E}=\pi_{E}(a|s)/\pi_{E}(\widetilde a|s)$. Replacing Eqs. \eqref{equation:pioccuwithXi} and \eqref{equation:EoccuwithXi} into the inverse dynamics disagreement, we obtain,
\begin{align}
\begin{split}
    \mathbb{D}_{KL}(\rho_{\pi}(a|s,s')||\rho_{E}(a|s,s')) &= \int_{S\times A\times S} \rho_{\pi}(s,a,s') log(\frac{\Xi_{\pi}}{\Xi_{E}}) dsdads'
    \\&\approx \int_{S\times A\times S} \rho_{\pi}(s,a,s') log(1) dsdads'
    \\&\approx 0,
\end{split}
\end{align}
which means that there would be almost no difference in terms of the optimizing target between LfD and LfO even with bounded randomness. Hence, we can conclude that LfO is almost equivalent to LfD in bounded randomness robot environments.
\end{proof}

\section*{B. Environment Specifications and Expert Data}
In this section, we introduce the OpenAI Mujoco tasks used in our experiments and give details on how to generate expert data. 
\subsection*{B.1 Environment Specifications}
The specifications of the tested environments are listed in Table.  \ref{mujocotask}. %Table \ref{} illustrates the specifications of the OpenAI Mujoco robot tasks and the performance of the expert data. \par
\begin{table*}[h!]
\centering
  \caption{Specifications of OpenAI Mujoco robot tasks.}
  \label{mujocotask}
  \centering
  \begin{tabular}{c c c c}
    %\toprule
    %\cmidrule(r){1-4}
    \hline
    Environment     & State Space     & Action Space & Max-Step\\
    %\midrule
    \hline
    InvertedPendulum-v2 & 4  & 1  & 1000\\
    InvertedDoublePendulum-v2     & 11 & 1  &  1000\\
    Hopper-v2     & 11  & 3 &  1000\\
    Walker2d-v2     & 17  &  6 &  1000\\
    HalfCheetah-v2     & 17  &  6 &  1000 \\
    %Ant-v2     & 111  &  8 &  1000\\
    Humanoid-v2     &  376   & 17 &  1000\\
    HumanoidStandup-v2  &  376   & 17 &  1000\\
    \hline
    %\bottomrule
  \end{tabular}
\end{table*}
\subsection*{B.2 Expert Data}
Expert data is the foundation for imitation learning. In terms of LfD and LfO, the two approaches require different expert data lying in that the former needs both states and actions while the latter merely demands states for accomplishing the task. We use the reinforcement learning algorithm Soft Actor-Critic (SAC) to train an expert policy for the Mujoco tasks \cite{haarnoja2018soft}. Our experts are trained using the implementation of OpenAI Spiningup with the default configuration \cite{SpinningUp2018}. %After the expert policy is trained, we can record the states and actions by interacting with the environment via selecting actions deterministically.
Once we have a converged SAC policy, we execute its deterministic part, i.e., the predicted mean action values of each action dimension, in the environment to collect transition tuples including states and actions. The episode length is set to $1000$ and an episode of states and actions is called a trajectory. In terms of LfO, the expert data can be generated by omitting actions in the trajectories and stacking adjacent states to gather $(s,s')$ pairs. Generally, the generation of the expert reference is not crucial for the evaluation of different imitation learning methods, but we incorporate this part to make our work easy to be reproduced and self-contained. \par
%And we want to clarify that the main focus of imitation learning is how to utilize the expert data to imitate rather than how to generate the expert data. The aim of this part is to make our paper clear and self-contained. And the way of generating the expert data is not so important that other research fellows could chose the methods to create the reference data freely.

\section*{C. Implementation Details}
Our implementation is based on the GAIL in OpenAI Baselines \cite{dhariwal2017baselines}. In this section, the complete hyper-parameters and the number of expert data we used for every task are given. 
%\subsection{C.1 Hyper-parameters}
Table. \ref{hyperparam} shows all the hyper-parameters for our experiments while Table. \ref{numoftransitions} presents the number of expert trajectories to conduct imitation learning in the paper. 
\begin{table}[!htb]
\caption{Hyper-parameters in experiments.}
\centering
\begin{tabular}{ l|l}
 \hline
 Hyper-parameters& Value\\
 \hline
 Common parameters & \quad\\
 \qquad Network size (For all networks)   & (100,100)  \\
 \qquad Activation & $ReLU$\\
 \qquad Batch size  & 1,024 \\
 \qquad Optimizer  & Adam \\
 \qquad Generator network update times & 3 \\
 \qquad Discriminator network update times & 1 \\
 \hline
 TRPO parameters & \quad\\
 \qquad $\gamma$ (Generalized Advantage Estimation Gamma) & 0.995\\
 \qquad $\lambda$ (Generalized Advantage Estimation Lambda) & 0.97\\
 \qquad Maximum KL & 0.01\\
 \qquad Learning rate (Value network)  & $1 \times 10^{-3}$ \\
 \qquad Value iteration  & 5 \\
 \qquad Policy entropy  & 0.0 \\
 \hline
 Discriminator parameters & \quad\\
 \qquad Learning rate (Discriminator network) & $3 \times 10^{-4}$ \\
 \qquad Discriminator entropy  & $1 \times 10^{-3}$\\
 \hline
\end{tabular}
\label{hyperparam}
\end{table}

\begin{table}[h!]
\centering
  \caption{Number of expert trajectories.}
  \label{numoftransitions}
  \centering
  \begin{tabular}{c c}
    %\toprule
    %\cmidrule(r){1-4}
    \hline
    Environment     & Expert Trajectories \\
    %\midrule
    \hline
    InvertedPendulum-v2 & 20 \\
    InvertedDoublePendulum-v2     & 20 \\
    Hopper-v2     & 20\\
    Walker2d-v2     & 100\\
    HalfCheetah-v2     & 100 \\
    Humanoid-v2     &  20\\
    HumanoidStandup-v2  &  20 \\
    \hline
    %\bottomrule
  \end{tabular}
\end{table}
% maybe you can add the expert performance for this task.
% 

\section*{D. Additional Experiments}
To thoroughly validate our theoretical analysis, we conduct extensive comparison experiments between LfO and LfD on OpenAI Mujoco tasks \cite{brockman2016openai}. Due to the space limit of the paper, some experiment results are presented here.  
\subsection*{D.1 Additional Tested Environments}
The learning curve of InvertedDoublePendulum-v2 is illustrated in Fig. \ref{fig:learningcurveIDP}. The final performance of the expert, GAIL and GAIfO are, 9357.5$\pm$0.1, 8763.4$\pm$ 830.1, and 8832.6$\pm$356.4, respectively. Hence, we can conclude that the performance of GAIfO is comparable to that of GAIL in the environment InvertedDoublePendulum-v2, which supports the almost equivalence claim between LfO and LfD in the paper. 
\begin{figure}[htbp]
\centering
    %\centering
\includegraphics[width=0.4\textwidth]{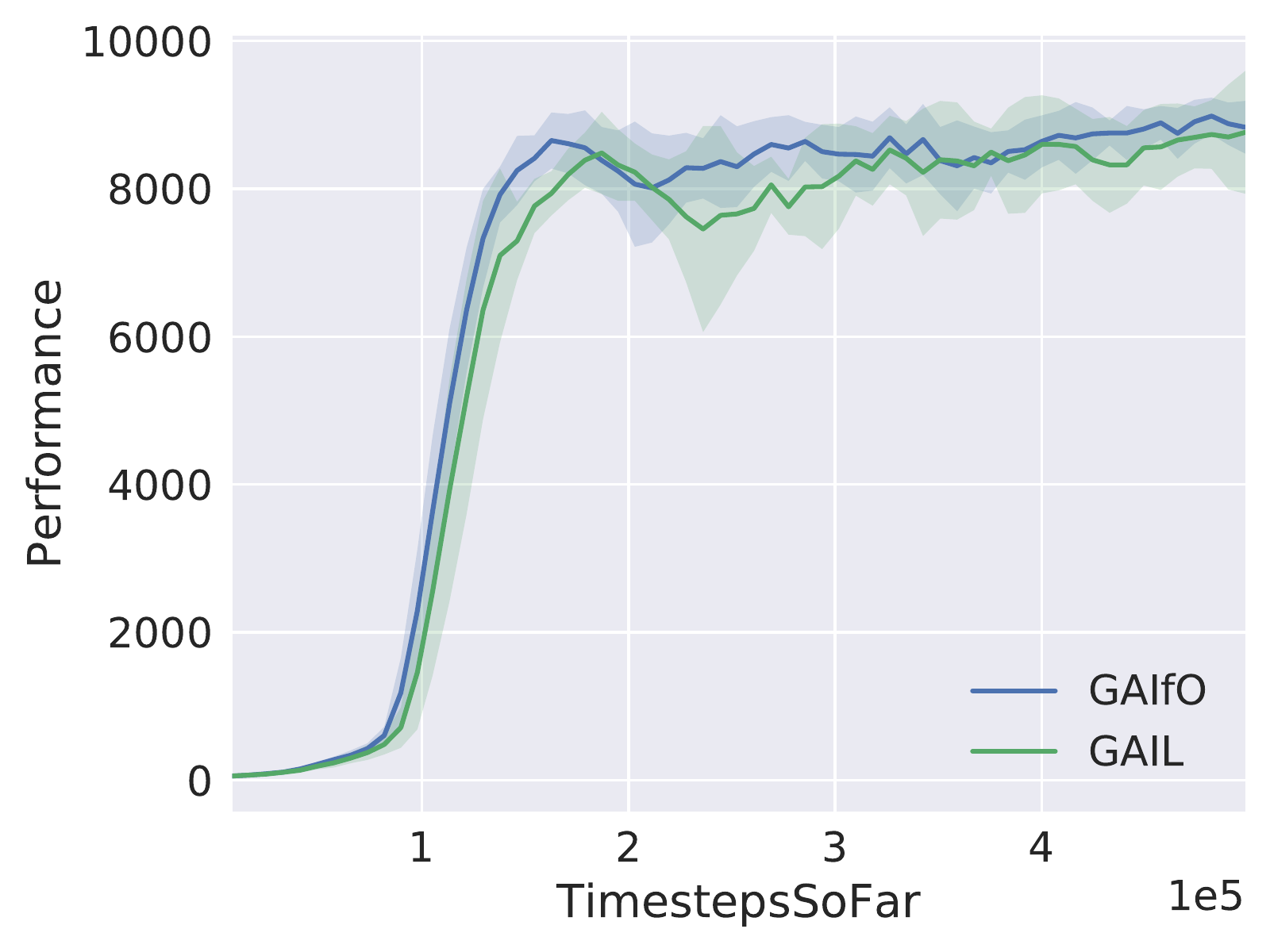}
\caption{Learning curves of GAIL and GAIfO in InvertedDoublePendulum-v2.}
\label{fig:learningcurveIDP}
\end{figure}
\subsection*{D.2 Impact of Trajectory Size}
Actually, GAIL can learn to mimic the expert with several expert trajectories, which means that it has high data utilization efficiency. We test the impact of the number of trajectories used in GAIL and GAIfO in this subsection. From Fig. \ref{impactoftraj}, it is clear that GAIfO is also able to clone expert behaviours with several expert trajectories, which means that GAIfO also has very high data utilization efficiency. Furthermore, the performance of GAIfO is comparable to that of GAIL in spite of the number of expert trajectories. 
\begin{figure*}[htbp]
\centering
    \subfigure[InvertedPendulum-v2]{
    %\centering
    \includegraphics[width=0.3\textwidth]{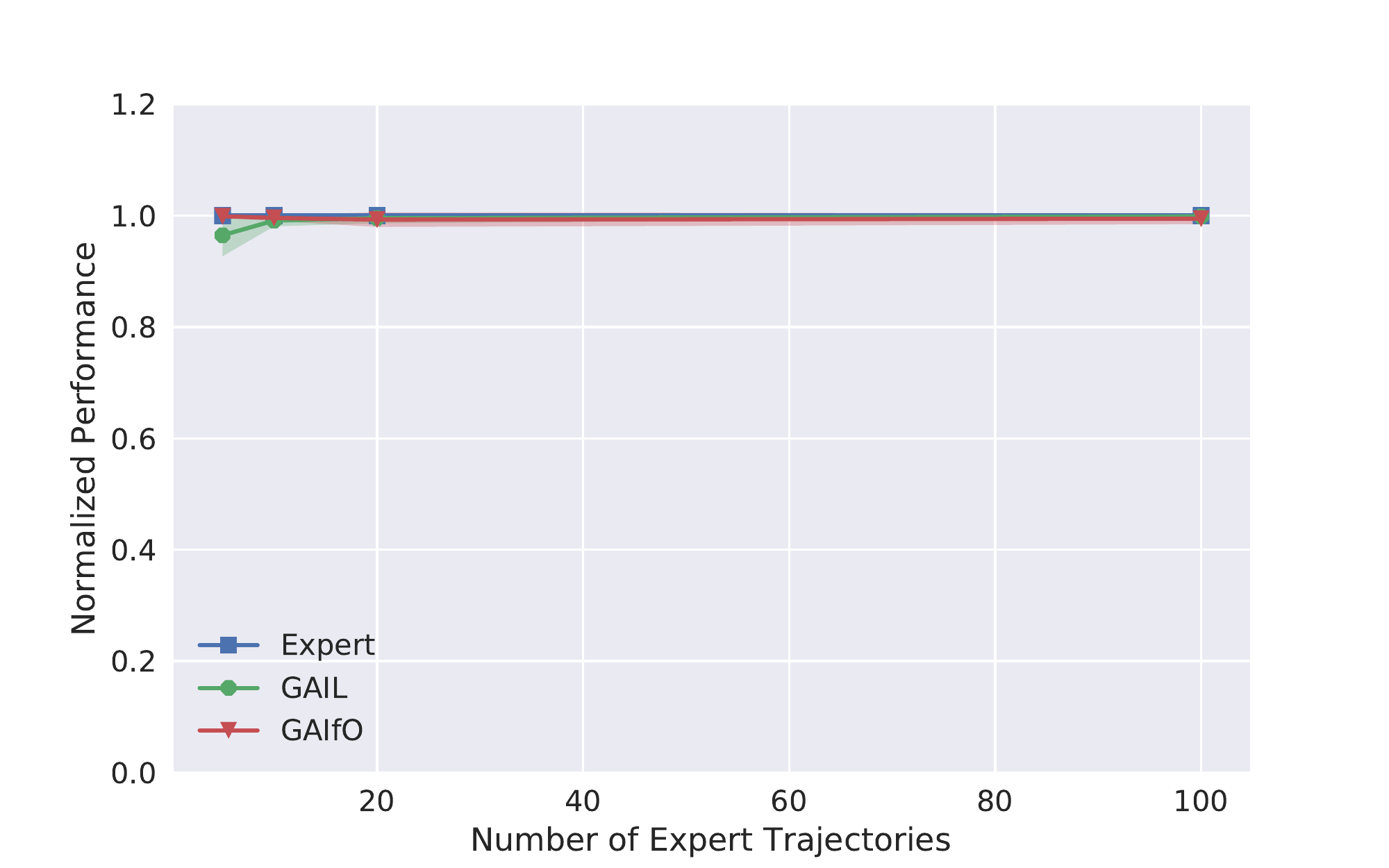}
    }
    \subfigure[InvertedDoublePendulum-v2]{
    %\centering
    \includegraphics[width=0.3\textwidth]{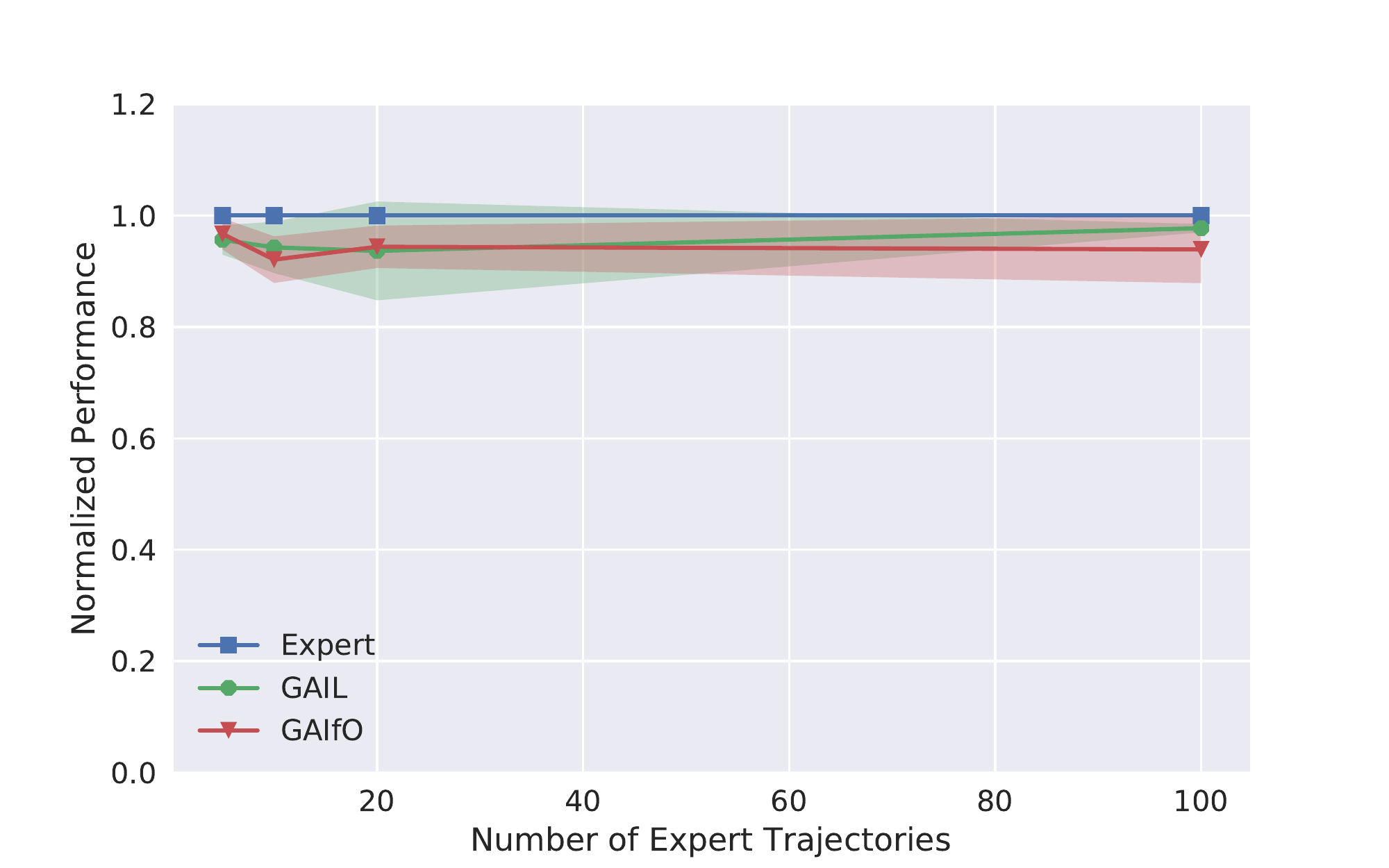}
    }
    %\subfigure[InvertedDoublePendulum-v2]{
    %\centering
    %\includegraphics[width=0.3\textwidth]{InvertedDoublePendulum-v2fianl.pdf}
    %}
    \subfigure[Hopper-v2]{
    %\centering
    \includegraphics[width=0.3\textwidth]{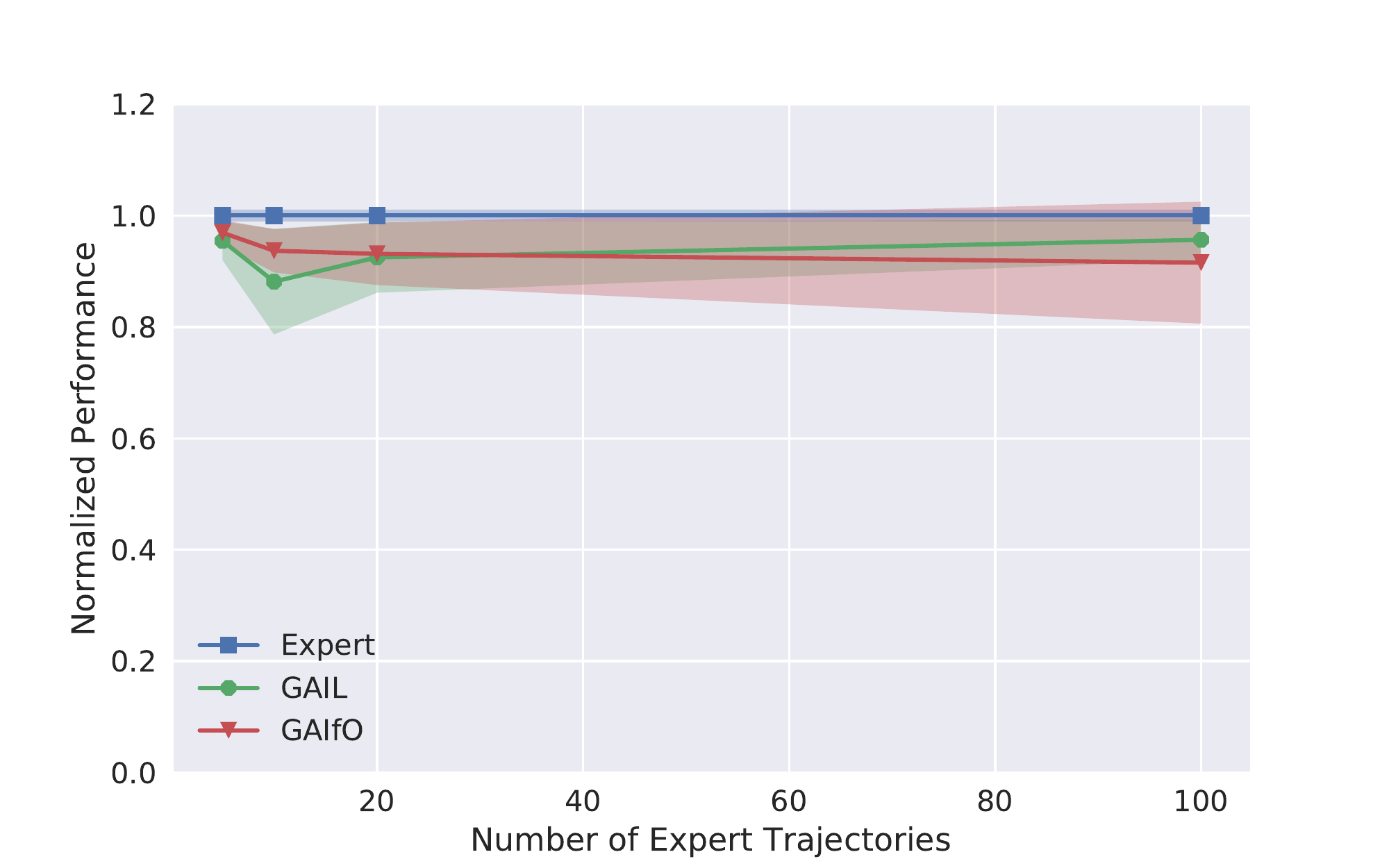}
    }
    \subfigure[Walker2d-v2]{
    %\centering
    \includegraphics[width=0.3\textwidth]{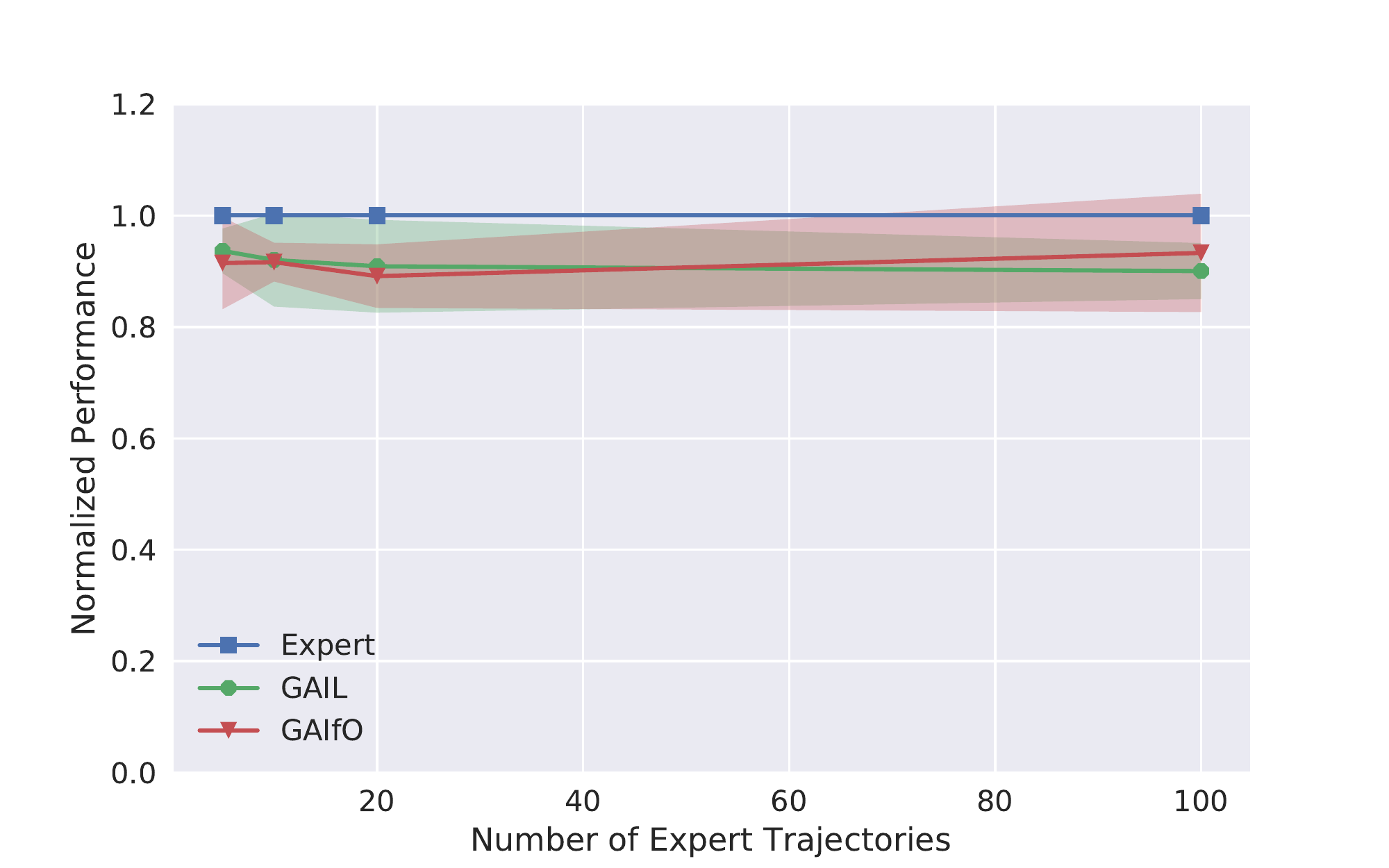}
    }
    \subfigure[HalfCheetah-v2]{
    %\centering
    \includegraphics[width=0.3\textwidth]{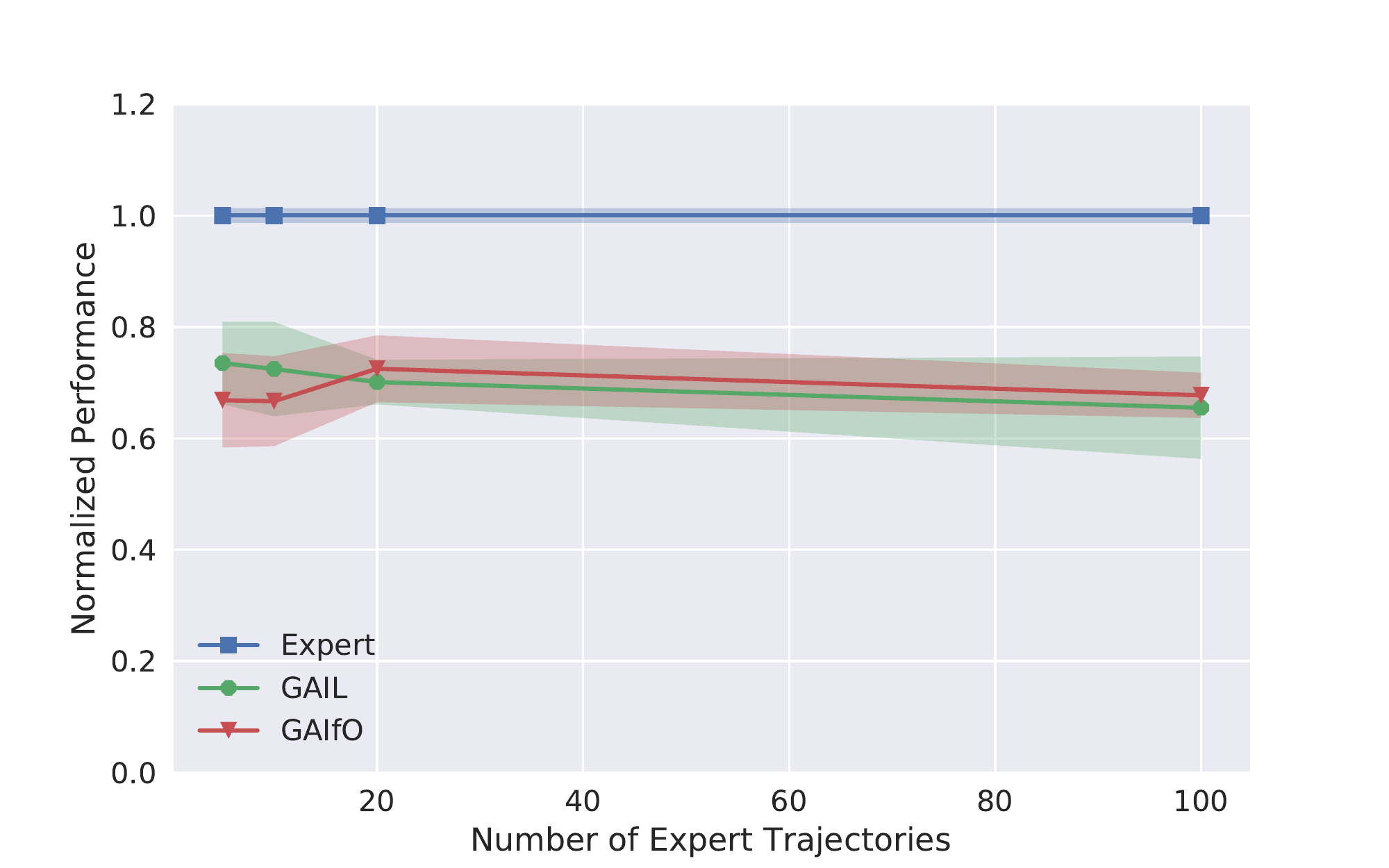}
    }
    \subfigure[Humanoid-v2]{
    %\centering
    \includegraphics[width=0.3\textwidth]{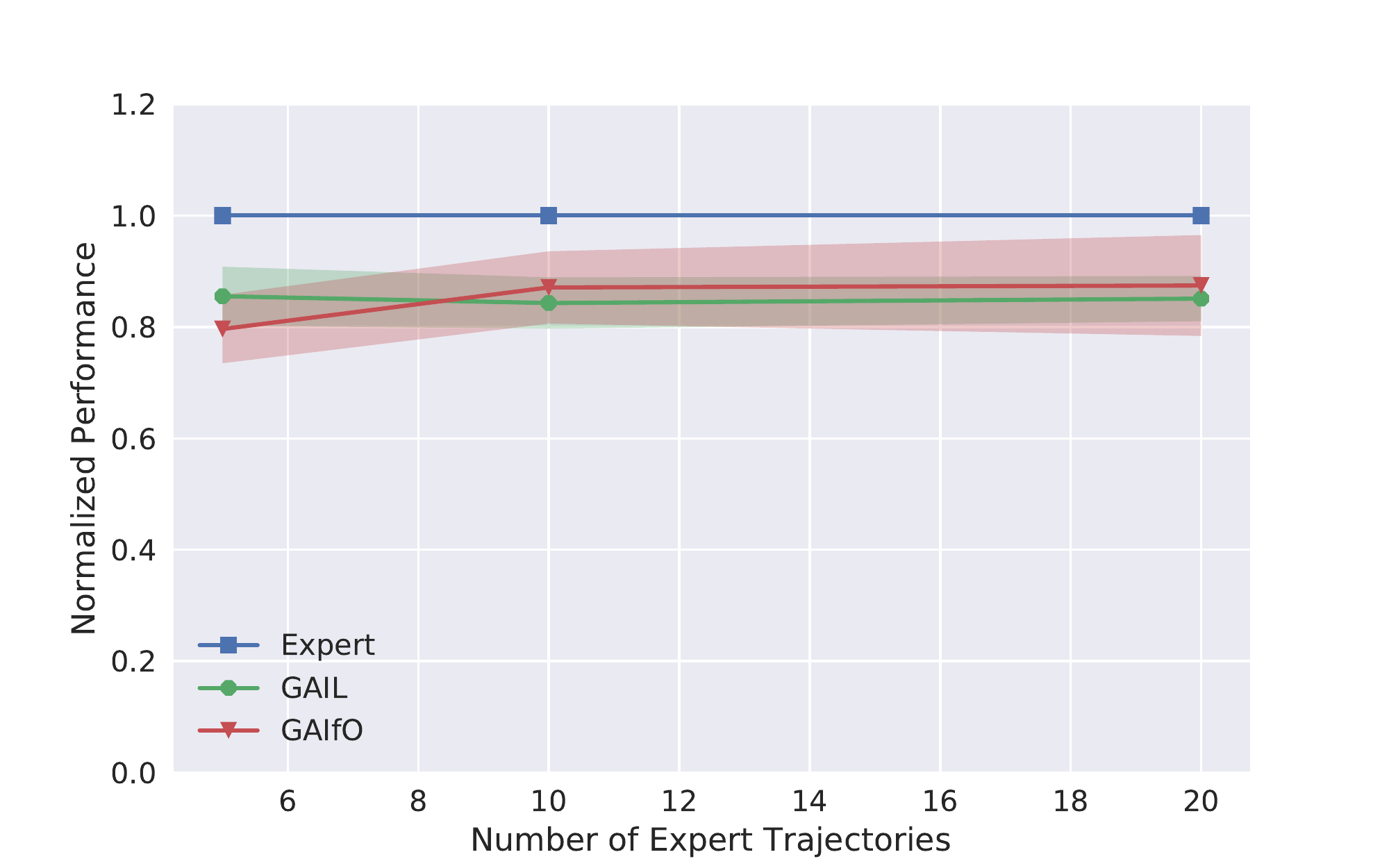}
    }
\caption{Impact of the number of expert trajectories on the performance of GAIL and GAIfO. The performance has been normalized by dividing by the expert performance.}
\label{impactoftraj}
\end{figure*}

\subsection*{D.3 Impact of Input Normalization and Spectral Normalization}
In the paper, we demonstrate that both input normalization and spectral normalization help to improve the performance of GAIL and GAIfO with only one specific environment. In this subsection, we compare the performance of GAIL and GAIfO with/without input normalization or spectral normalization in several additional environments. The results are shown in Fig. \ref{inputnorm}-\ref{spectralnorm}. It can be seen that without input normalization, the performance of GAIL and GAIfO degenerates dramatically. Specifically, in Walker2d-v2, a vast performance gap between GAILw/oIN and GAIfOw/oIN is observed. In terms of spectral normalization, both GAIfO-sn and GAIL-sn outperform the un-spectral normalized versions, \emph{i.e.}, GAIfO and GAIL. In other words, spectral normalization helps to take full advantage of these generative adversarial imitation learning algorithms. 

\begin{figure}[h]
\centering
    \subfigure[Walker2d-v2]{
    %\centering
    \includegraphics[width=0.35\textwidth]{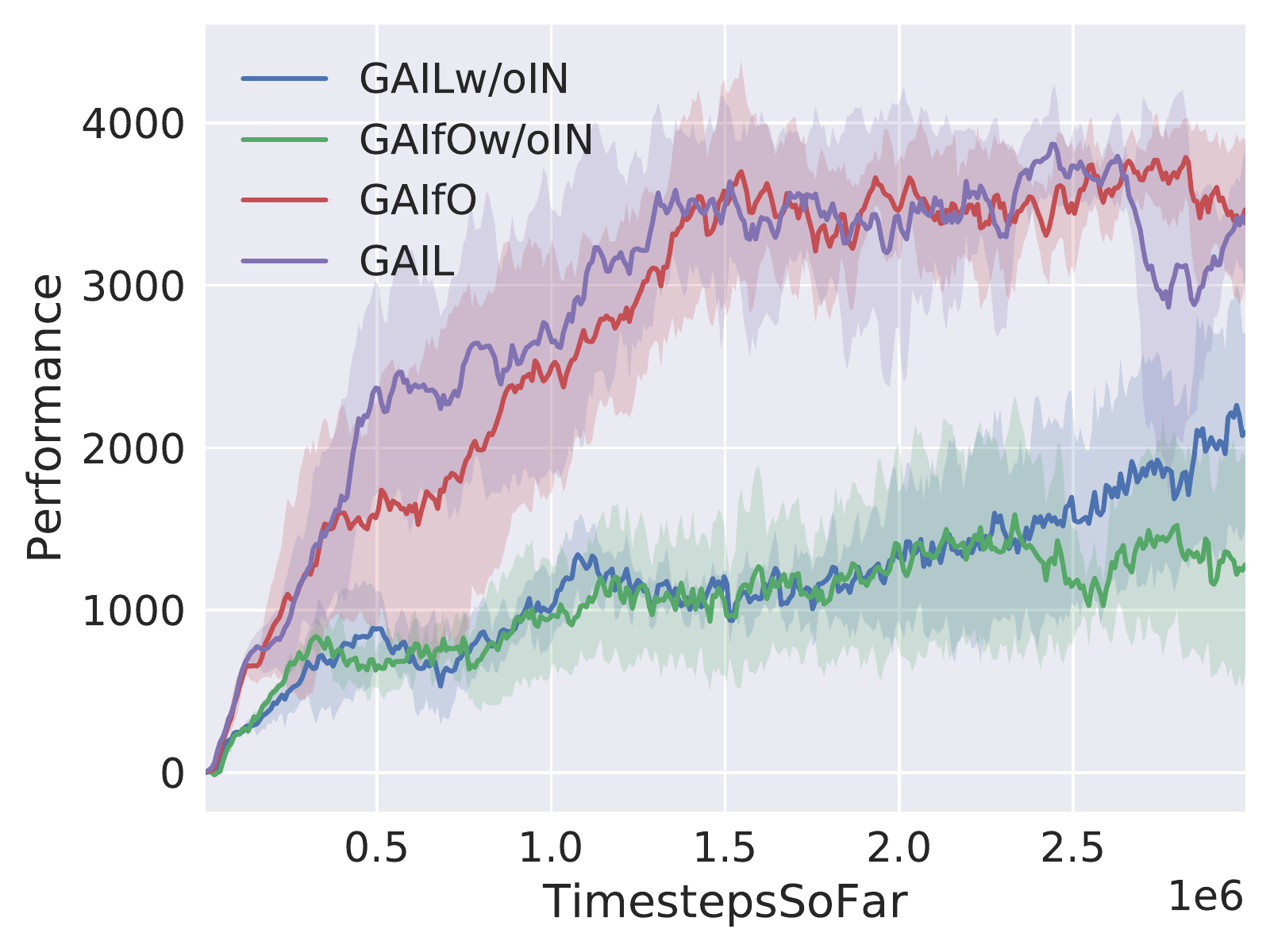}
    }
    \subfigure[HalfCheetah-v2]{
    %\centering
    \includegraphics[width=0.35\textwidth]{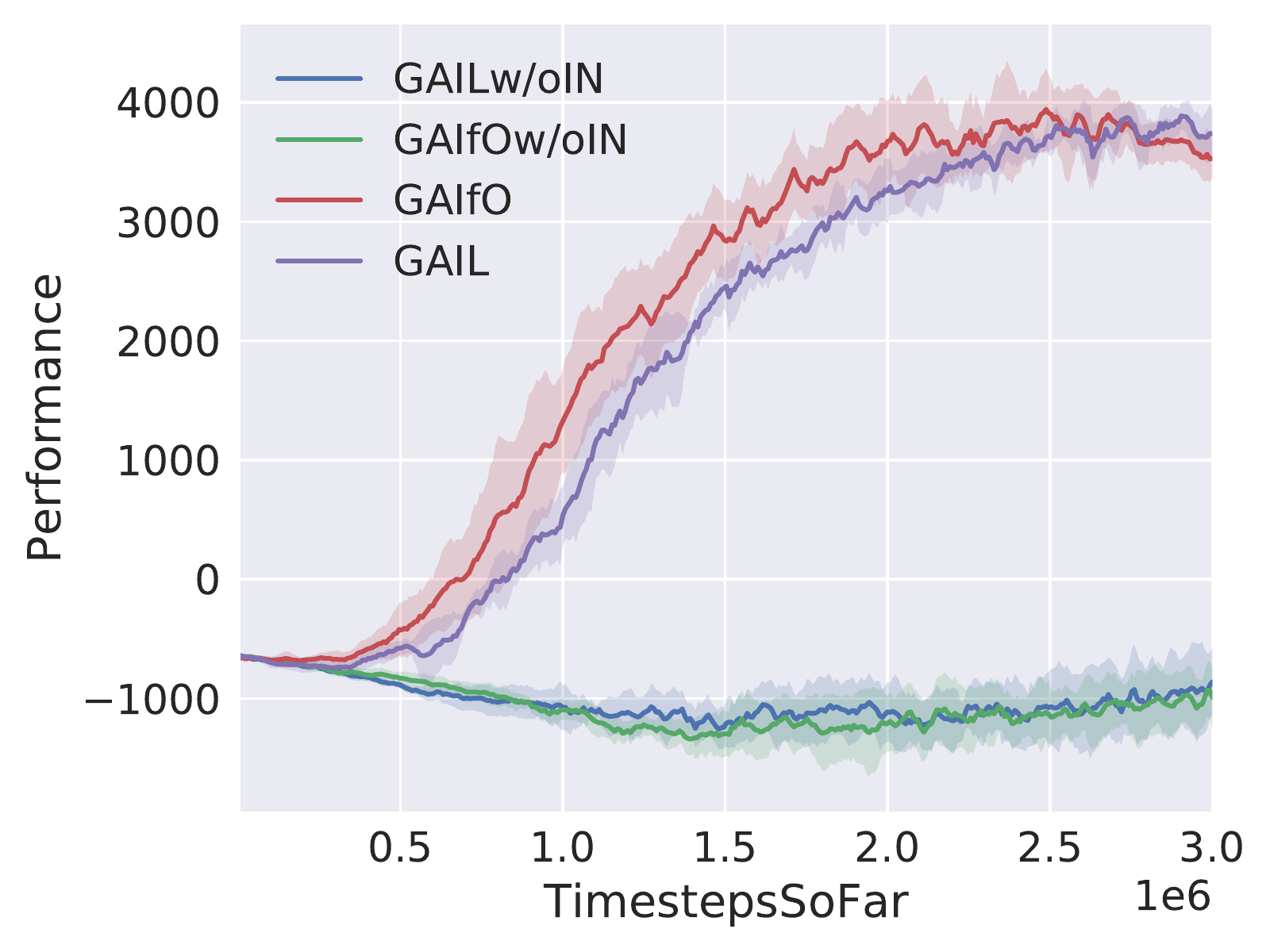}
    }
\caption{Impact of input normalization.}
\label{inputnorm}
\end{figure}

\begin{figure}[h]
\centering
    \subfigure[Walker2d-v2]{
    %\centering
    \includegraphics[width=0.35\textwidth]{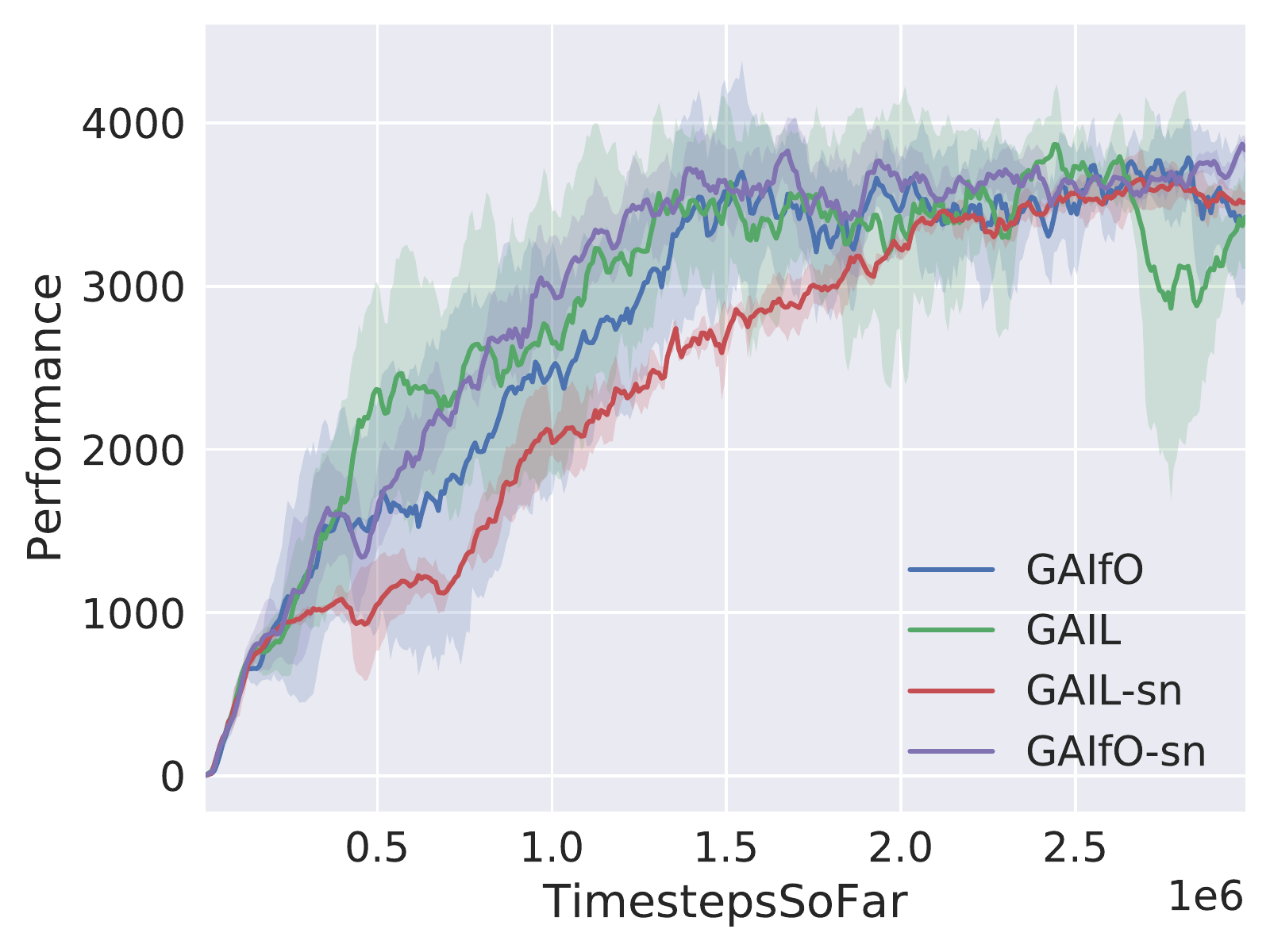}
    }
    \subfigure[Humanoid-v2]{
    %\centering
    \includegraphics[width=0.35\textwidth]{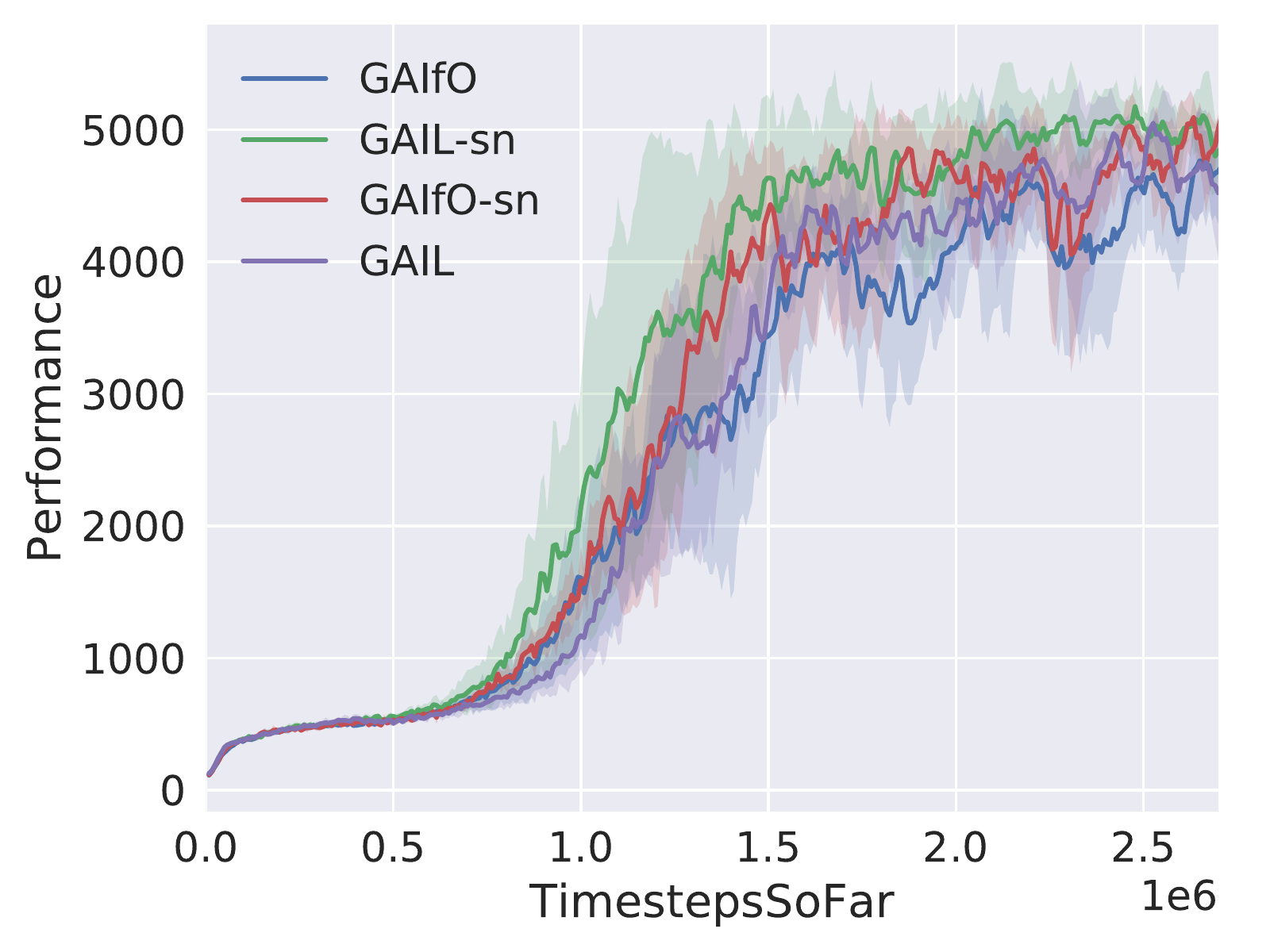}
    }
\caption{Impact of spectral normalization.}
\label{spectralnorm}
\end{figure}

%\subsection{C.3 Improper parameters}
%In this subsection, some examples which shows the performance gap are presented. The parameters are listed in Table xxx and the learning curve are shown in xxx. 

\subsection*{D.4 Third Party Verification}
Our experiment results illustrate that GAIfO is able to achieve comparable performance to GAIL in deterministic robot environments. In addition, we adopt a third party implementation of GAIL and GAIfO \cite{ota2020tf2rl} that is available online to make further efforts to demonstrate the almost equivalence between GAIL and GAIfO. This third party implementation is called TF2RL and the authors provide implementations of deep reinforcement learning algorithms using TensorFlow 2.x.. Specifically, GAIL and GAIfO, including corresponding spectral normalized versions, GAIL-sn and GAIfO-sn, are included in this repository. To the best of our knowledge, this repository is the only one, which provides the open-sourced implementation of GAIfO. \par 
There are three major differences lying between our implementation and the third party implementation. First, we use RL algorithm TRPO \cite{schulman2015trust} to serve as the generator while they employ DDPG \cite{lillicrap2015continuous} to train the policy. Second, their repository does not support parallel running for imitation learning algorithms. This means that the time to get the result of an experiment is relatively long. In contrast, our code uses Message Passing Interface (MPI) to speed up the training process. Third, spectral normalization plays a comparatively important role in their GAIL and GAIfO. We find that their GAIL and GAIfO are likely to fail to imitate the expert without spectral normalization and generate NaN (Not a Number) during training. On the contrary, our implementation is able to train the policy stably and imitate the expert well even without spectral normalization.\par

In the experiments, we use default parameters in their implementation, which are listed in Table. \ref{hyperparam2}. As discussed above, their GAIL and GAIfO sometimes may fail to imitate the expert. So, we just compare the performance of their spectral normalized algorithms. As noted in the paper, the introduction of spectral normalization does not affect the core of LfO and LfD. It helps to take full advantage of LfO and LfD and does not affect the optimizing targets of LfO and LfD. The HalfCheetah-v2 environment is chosen to test the performance of GAIL and GAIfO in TF2RL. And we run the same experiment for 5 times to fairly evaluate  GAIL and GAIfO since they do not provide the interface to control the seed. The task uses 20 expert trajectories to conduct imitation learning. The learning curves using their code are shown in Fig. \ref{fig:learningcurvetf2rl}. From Fig. \ref{fig:learningcurvetf2rl}, we can conclude that both GAIL and GAIfO can achieve the expert-level performance, and the performance of GAIfO is comparable to that of GAIL. 

\begin{table}
\caption{Hyper-parameters in TF2RL.}
\centering
\begin{tabular}{l|l}
 \hline
 Hyper-parameters& Value\\
 \hline
 Common parameters & \quad\\
 \qquad Network size (for all networks)   & (400,300)  \\
 \qquad Activation & $ReLU$\\
 \qquad Optimizer  & Adam \\
 \qquad Generator net update times & 1 \\
 \qquad Discriminator net update times & 1 \\
 \hline
 DDPG parameters & \quad\\
 \qquad Discount & 0.99\\
 \qquad Replay buffer size  &  $1 \times 10^{6}$\\
 \qquad Warmup steps  &  $1 \times 10^{4}$\\
 \qquad Learning rate (Value net and policy net)  & $1 \times 10^{-3}$ \\
 \qquad Batch size  & 100 \\
 \hline
 Discriminator parameters & \quad\\
 \qquad Learning rate (Discriminator net) & $1 \times 10^{-3}$ \\
  \qquad Batch size  & 32 \\
 \hline
\end{tabular}
\label{hyperparam2}
\end{table}

\begin{figure}
\centering
    %\centering
\includegraphics[width=0.4\textwidth]{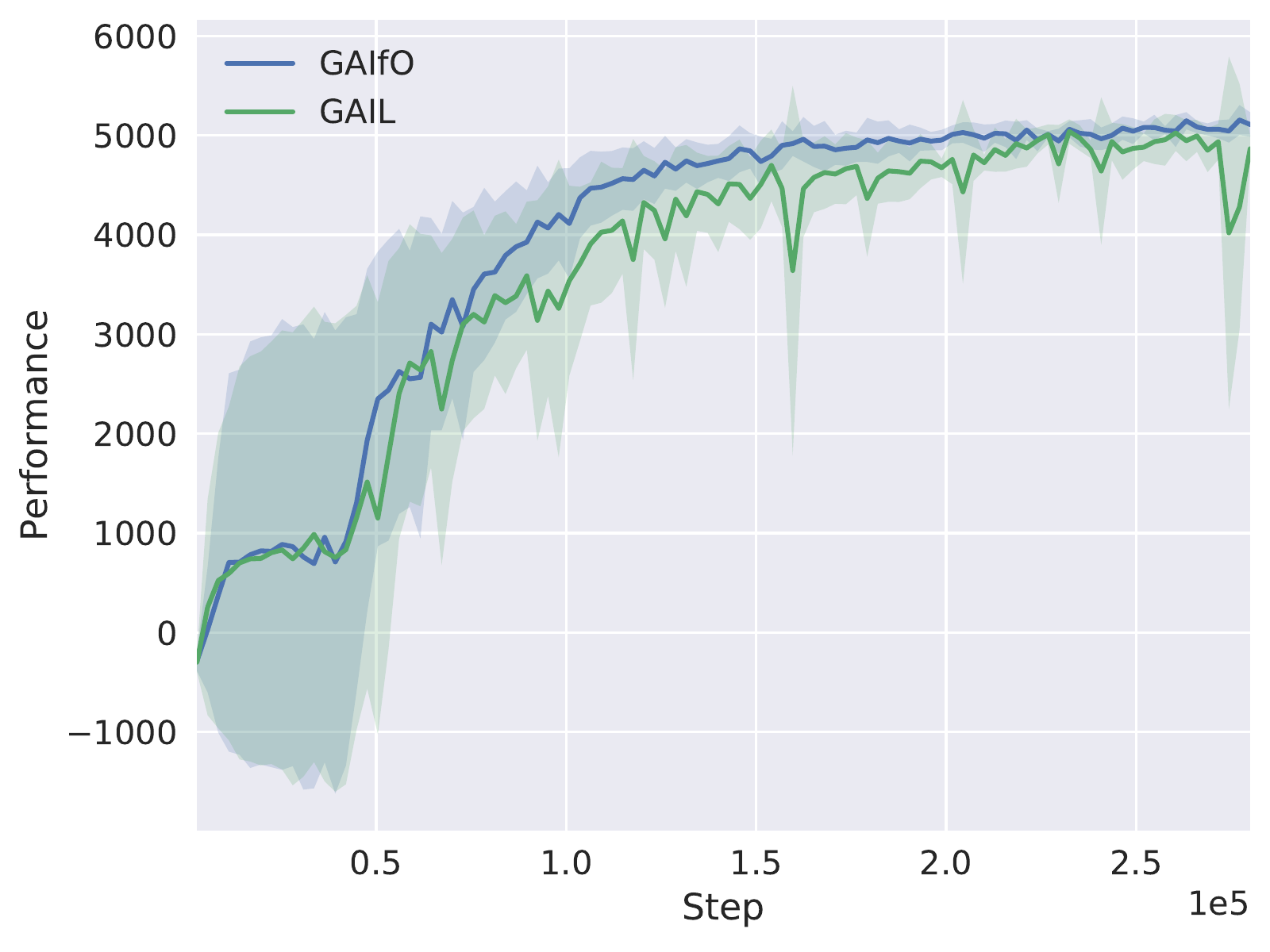}
\caption{Learning curves of GAIL and GAIfO using TF2RL in HalfCheetah-v2.}
\label{fig:learningcurvetf2rl}
\end{figure}

\end{document}